%% file: main.tex
\documentclass{article} 
\usepackage{iclr2027_conference,times}

\input{preamble}

\title{Latent Inference-Time Guidance of \\
Time Series Foundation Models}

\author{Chlo\'e Hashimoto-Cullen\\
LPSM, Sorbonne Universit\'e \\
\texttt{hashimoto@lspm.paris} 
\And
Amaury Durand \\
EDF R\&D\\
\And
Laurent Bozzi \\
EDF R\&D
\And
Benjamin Guedj \\
UCL \\
\And
Yannig Goude \\
EDF R\&D \\
\And
Sylvain Le Corff \\
LPSM, Sorbonne Universit\'e \\
}

\iclrfinalcopy 
\begin{document}

\maketitle

\begin{abstract}
Time Series Foundation Models (TSFMs) currently provide state-of-the-art results in forecasting tasks. They are available out-of-the-box and rely on in-context learning to make their predictions, which makes the quality of their performance highly sensitive to the user-selected lookback, covariates, horizon and training data distributions. In practise, the quality of the forecasts are variable but complementary, which highlights the need for a principled ensembling approach, rather than selecting the best context. This paper introduces Latent Inference-Time Guidance for TSFMs, which adaptively combines a pool of TSFM forecasts through a time-dependent latent space with independent components. The framework comes equipped with identifiability and reconstruction guarantees, whilst maintaining the off-the-shelf aspect of foundation models. We provide experiments on datasets at various frequencies and from multiple domains: these show that the approach is competitive with traditional ensembling approaches.
\end{abstract}

\section{Introduction}
\label{sec:introduction}
\input{sections/introduction}

\section{Background and Related Works}
\label{sec:background}
\input{sections/background}

\section{Latent Guidance of Foundation Models}
\label{sec:theory}
\input{sections/theory}

\section{Experiments}
\label{sec:experiments}
\input{sections/experiments}

\section{Discussion, Limitations and Conclusion}
\label{sec:discussion}
\input{sections/discussion}

\subsection*{AI use statement}
In this work, we used generative AI tools to implement methods, to clean and reformat datasets and to support qualitative and thematic data analysis.
We have not used generative AI tools for dataset generation, help in developing theoretical models or conceptual frameworks, formulating mathematical claims, providing critical ingredients for proving mathematical claims, assisting in the writing of proofs, proposing or refining hypotheses, designing or providing feedback on research methodology or experiments or interpreting results,
and assisting with translation is not applicable to this work.
Additionally, we used generative AI tools for creating and editing software code. We have reviewed all AI-assisted work: all LLM-generated code was verified and tested for correctness by the lead author. We take responsibility for the final content of this work, including text, claims or artifacts produced with the aid of generative AI.

\subsection*{Reproducibility statement}

The results given in \Cref{sec:theory} are derived from cited works. Assumptions made to apply the results are given in \Cref{sec:theory}. The proofs for further theoretical guarantees developed for \algo are given in Appendix~\ref{app:technical}. The code for \algo is linked to an anonymous Github in \Cref{sec:experiments}. Further empirical details on the empirical implementation are given in \Cref{app:experiments}.

\clearpage

\bibliography{references}
\bibliographystyle{iclr2027_conference}

\appendix
\section{Technical Results}
\label{app:technical}
\input{sections/app_technical}

\section{Empirical Details}
\label{app:experiments}
\input{sections/app_experiments}

\end{document}

%% file: preamble.tex
\usepackage[utf8]{inputenc} 
\usepackage[T1]{fontenc}    
\usepackage{hyperref}       
\usepackage{url}            
\usepackage{booktabs}       
\usepackage{amsfonts}       
\usepackage{nicefrac}       
\usepackage{microtype}      
\usepackage[table,xcdraw]{xcolor}         
\usepackage{xspace}

\usepackage{caption}
\usepackage{natbib}

\usepackage{amsmath}
\usepackage{amssymb}
\usepackage{amsthm}
\usepackage{mathtools}

\usepackage{cleveref}

\usepackage{booktabs}
\usepackage{csvsimple}
\usepackage{placeins}
\usepackage{multirow}

\makeatletter
\AddToHook{cmd/appendix/before}{\crefalias{section}{appendix}}
\makeatother

\def\eqsp{\;}
\def\rmd{\mathrm{d}}

\def\mcd{\mathcal{D}}
\def\mcl{\mathcal{L}}
\def\mcn{\mathcal{N}}
\def\mcx{\mathcal{X}}
\def\mcy{\mathcal{Y}}
\def\E{\mathbb{E}}
\def\R{\mathbb{R}}

\newtheorem{theorem}{Theorem}[section]
\newtheorem{proposition}[theorem]{Proposition}

\newcommand{\algo}{\textsc{Lit}i\textsc{g}-\textsc{Tsfm}\xspace}

\newcounter{hypA}
\newenvironment{hypA}{
    \refstepcounter{hypA}
    \begin{itemize}
    \item[{\bf A\arabic{hypA}}]
    }
{\end{itemize}}

\newcounter{hypB}
\newenvironment{hypB}{
    \refstepcounter{hypB}
    \begin{itemize}
    \item[{\bf B\arabic{hypB}}]
    }
{\end{itemize}}

\newcounter{hypC}

%% file: sections/introduction.tex
Time Series Foundation Models (TSFMs) have recently enabled competitive zero-shot and in-context forecasting across a wide range of domains. While deep learning methods for tabular and sequential data historically struggled to consistently outperform classical approaches \citep{shwartzziv2022tabular,grinsztajn2022why,mcelfresh2023when}, recent TSFM architectures based on Transformers \citep{nie2023a}, in-context learning \citep{lu2025incontext, liu2025timerxl} and large-scale pre-training have narrowed this gap \citep{liu2024timer}. A wealth of these models are currently available \citep{grinsztajn_tabpfn-25_2025, ansari_chronos-2_2025, liu_moirai_2025, auer_tirex_2025, qu_tabiclv2_2026}, with benchmarks such as GIFT-Eval, fev-bench and Chronos available to compare them \citep{aksu2024gifteval, shchur2025fev, ansari_chronos_2024}. These models offer a compelling paradigm shift: rather than training a task-specific model, one can leverage a pre-trained model and adapt it to a chosen context at inference time.

Despite these advances, foundation models -- and, more specifically, TSFMs -- are sensitive to their input. For a given model, the prompt (which in the case of TSFMs encompasses the covariates, lookback and forecasting horizon choices) greatly affects the model accuracy \citep{michalkiewicz2025not, romanou2026brittlebench}. In short, different pre-trained TSFMs, or even different configurations of the same model, often exhibit complementary strengths across datasets and temporal regimes, due to their differences in architectures and pre-training data \citep{meyer2025benchmarking, berthelier2026investigating}. This variability suggests that the central challenge is not selecting a single model, but rather combining multiple forecasts in a principled and adaptive manner.

Classical approaches to forecasting with ensembles \citep{oliveira2015ensembles}, such as mixtures of experts and online aggregation methods \citep{gaillard2014second, wintenberger2017optimal}, as well as state-space techniques such as Kalman filtering \citep{vilmarest2024adaptive}, provide well-established solutions to this mixture of models problem. However, existing methods are typically restricted to linear aggregation schemes and do not exploit the representational structure of modern foundation models. Conversely, a foundation model can be fine-tuned on a specific dataset; but this is computationally expensive, statistically inefficient in low-data regimes, and can be impossible when the model is only accessible as a black box.

This motivates a different perspective, which we refer to as \emph{inference-time guidance}. Combination mechanisms exist in generative modelling for time series and other modalities \citep{fedus2022switch,liu_moirai_2025,shi2025timemoe,yang2026expert}. These approaches combine expert models within a larger model. Rather than modifying or combining the existing foundation models themselves, we learn a lightweight mechanism that steers or combines the expert outputs using downstream observations. This paradigm has proved effective in generative modelling for time series and other modalities \citep{lee2025lightweight,jiang2025sra, bender2026visual}, where external signals can guide pre-trained models without retraining. Beyond time series, approaches guiding existing foundation models exist, mostly using autoencoding architectures to identify relevant features at inference \citep{bricken2023towards,oneill2024towards,le2024learning}. However, to the best of our knowledge, and despite its conceptual appeal, there is no general probabilistic framework for inference-time latent guidance in time-series forecasting.

In this paper, we treat TSFM forecasts as a collection of experts, and use the combination of variational latent modelling with a structured state-space formulation to propose a principled probabilistic framework for combining TSFM forecasts, while capturing temporal dependencies and uncertainty. The resulting model introduces a time-dependent latent state that encodes how each expert should be weighted or corrected at each time step. In its simplest form, this reduces to a dynamic mixture of experts with time-varying weights; more generally, it defines a flexible nonlinear guidance mechanism acting on the forecasts. Importantly, our method leaves the underlying foundation models frozen: only the lightweight guidance mechanism is learned, without requiring access to or fine-tuning of the TSFM parameters. A key component of this approach is the structure given to the latent space, where rather than learning arbitrary hidden representations, the representations learned have an independent component decomposition, which connects the approach to work in nonlinear independent component analysis (ICA), where auxiliary variables such as temporal dependence and covariates enables identifiability of latent representations \citep{halva_disentangling_2021}. In our setting, this allows guarantees on the induced denoised process.

Our contributions are threefold. First, we introduce a probabilistic latent guidance framework that adaptively combines and corrects forecasts from frozen TSFMs through a time-dependent latent state. Second, we connect this construction to structured nonlinear ICA and establish conditions for identifiability of the induced denoised process together with a reconstruction stability guarantee for the variational smoother.  Third, we demonstrate the empirical benefits of this approach across datasets spanning multiple domains and sampling frequencies.

The remainder of the paper is organised as follows. In \Cref{sec:background}, we introduce the lines of research at the intersection of which this work lies. \Cref{sec:theory} introduces the proposed latent guidance model, its variational formulation, and its inference and forecasting procedures, as well as presenting identifiability analysis and theoretical guarantees. Experiments results are reported in \Cref{sec:experiments}, where we benchmark our proposed method with time series foundation models and classical aggregation methods. \Cref{sec:discussion} discusses the avenues for future work. Additional technical details and proofs to our results are gathered in \Cref{app:technical} and further experimental details are given in \Cref{app:experiments}.

%% file: sections/background.tex
\textbf{Foundation models. } Prior to 2023, a variety of deep learning architectures existed for tabular data and time series. However, these struggled to work better than smaller-scale statistical and machine learning methods \citep{grinsztajn2022why,mcelfresh2023when}. With \citet{nie2023a}, the transformer architecture which had been introduced for computer vision and text-based tasks was adapted to tabular data, providing a deep learning architecture which was competitive with traditional methods. This was enabled the development of Tabular Foundation Models (TFMs), whose encoder or decoder architectures were pre-trained on large amounts of data. Encoder architectures mask parts of the time series and learn to reconstruct the missing segments \citep{woo2024unified, liu_moirai_2025}, whereas decoder architectures depend on causal modelling \citep{das2024decoder}. The first generation of TFMs were univariate, which limited their forecasting capacities. \citet{hollmann2023tabpfn, qu2025tabicl} provided the TabPFN and TabICL models, which could learn tabular patterns in-context, allowing covariate data and multivariate forecasting. Alongside these in-context learning architectures, \citet{ansari_chronos-2_2025} released Chronos-2, an encoder architecture that takes context covariates and \citet{das2024decoder} have recently release TimesFM 3, which is the latest generation of a patch-based, transformer encoder architecture. Even in these latest generation of models, variability in performance can be observed due to the difference in synthetic data used at training, but these models are now being used in many applications. Multiple TSFMs now combine multiple outputs to address this variability \citep{liu2025moiraimoe}.
Benchmarks now exist to compare foundation models' performance \citep{aksu2024gifteval, shchur2025fev}, with new models coming out on an almost weekly basis. 

TabICL is trained with in-context learning. To train the models, consider covariates $\mathbf{x}$ in input space $\mcx \subseteq \R^m$ (with $m>1$) and target variables $y$ in output space $\mcy \subseteq \R$, from which $n$ training points are sampled: $\mcd_\textrm{train} = (\mathbf{x}_{\textrm{train}}^i,y_{\textrm{train}}^i)_{i=1}^n \sim p(\mcd)$. Test points are sampled from the same distribution: $(\mathbf{x}_\textrm{test}, y_\textrm{test}) \sim p(\mcd)$. Thus, predictions can be made with a single forward pass of a pre-trained neural network: \(y_\textrm{test} \sim p(\cdot\mid \mathbf{x}_\textrm{test}, \mcd_\textrm{train})\). During training, the neural network parameters are updated with stochastic gradient descent. The models are trained with synthetic data (referred to as the \emph{prior} in the literature), which is generated using 
additional preprocessing. Inference is then run on real-world datasets. The model now has variants which have been trained on time series following the same rationale, but can be used directly on time series in tabular format. We refer to both types as a TSFM.

\textbf{Variational AutoEncoders. } Variational Auto-Encoders (VAE) introduce approximations of a target conditional distribution in the context of latent data models, see \cite{rezende2014stochastic,kingma2019introduction}. 
Consider target data \(\mathbf{y}_{1:T} \in\R^{T\times m}\) with covariates \(\mathbf{x}_{1:T} \in\R^{T\times d}\) and latent representation \(\mathbf{z}_{1:T} \in\R^{T\times d_\ell}\).
The latent variable generative model defines a joint density $(\mathbf{z}_{1:T},\mathbf{y}_{1:T})\mapsto p_\theta(\mathbf{y}_{1:T}| \mathbf{z}_{1:T},\mathbf{x}_{1:T})$   by specifying a prior $\mathbf{z}_{1:T}\mapsto p_{\theta}(\mathbf{z}_{1:T}|\mathbf{x}_{1:t})$ over the latent variable $\mathbf{z}_{1:T}$ and a conditional density $\mathbf{y}_{1:T}\mapsto p_\theta(\mathbf{y}_{1:T}|\mathbf{z}_{1:T},\mathbf{x}_{1:T})$ (also referred to as the decoder).  The normalised log-likelihood of $(\mathbf{y}^i_{1:T})_{1\leq i \leq n}$ is therefore given by
\[
\ell_n(\theta) = \frac{1}{n}\sum_{i=1}^n \log p_{\theta}(\mathbf{y}^i_{1:T}|\mathbf{x}_{1:T}) = \frac{1}{n}\sum_{i=1}^n \log \int p_{\theta}(\mathbf{z}_{1:T}|\mathbf{x}_{1:T})p_{\theta}(\mathbf{y}^i_{1:T}|\mathbf{z}_{1:T},\mathbf{x}_{1:T})\rmd \mathbf{z}_{1:T}\eqsp,
\]
and the conditional distribution $p_{\theta}(\mathbf{z}_{1:T}|\mathbf{y}_{1:T},\mathbf{x}_{1:T})\propto p_{\theta}(\mathbf{z}_{1:T}|\mathbf{x}_{1:T})p_{\theta}(\mathbf{y}_{1:T}|\mathbf{z}_{1:T},\mathbf{x}_{1:T})$. Since the integral for the marginalising the latent variable is intractable, the marginal likelihood functions $p_{\theta}(\mathbf{y}^i_{1:T})$ for $1\leq i \leq n$ are not available explicitly. So it is not possible to maximise the average marginal log-likelihood of the data. Since a maximum likelihood estimator cannot be computed simply, VAEs introduce a variational approach which aims to simultaneously provide a parameter estimate and an approximation of the conditional distribution of the latent variable given the observation. Consider a family of probability density functions $\{ q_{\varphi}(\cdot|\mathbf{y}_{1:T},\mathbf{x}_{1:T})\}_{\varphi\in\Phi}$. 
Then $\log p_\theta(\mathbf{y}_{1:T}|\mathbf{x}_{1:T})\geq \mathcal{L}(\theta,\varphi,\mathbf{y}_{1:T},\mathbf{x}_{1:T})$, where
\begin{equation}
\label{eq:elbo}
    \mathcal{L}(\theta,\varphi,\mathbf{y}_{1:T},\mathbf{x}_{1:T}) = \mathbb{E}_{q_{\varphi}(\cdot|\mathbf{y}_{1:T},\mathbf{x}_{1:T})}\left[\log \frac{p_\theta(\mathbf{z}_{1:T},\mathbf{y}_{1:T}|\mathbf{x}_{1:T})}{q_{\varphi}(\mathbf{z}_{1:T}|\mathbf{y}_{1:T},\mathbf{x}_{1:T})}\right]
\end{equation}
is the Evidence Lower BOund (ELBO). In this setting, \(q_\varphi(\mathbf{z}_{1:T}|\mathbf{y}_{1:T}, \mathbf{x}_{1:T})\) is referred to as the \emph{encoder}.

Variational excess-risk bounds exist for state-space models, and can be used to bound the risk. \citet{chagneux2024importance} use a backward factorisation of the variational distribution to provide bounds on the estimation error. Within the VAE literature, \citet{cherief-abdellatif_pac-bayesian_2022} provide PAC-Bayesian error bounds on the reconstruction capacities of VAEs. \citet{mbacke_statistical_2023} develop conditional bounds which can then be applied to reconstruction, generation and re-generation. \citet{hashimotocullen2026pacbayesian} extend these conditional bounds to sequential data with a Markovian latent structure for reconstruction errors. 

\textbf{Structured Nonlinear Independent Component Analysis (ICA).} Nonlinear ICA \citep{hyvarinen1999nonlinear} assumes that observed data $\mathbf{x}\in \R^n$ is generated by some invertible nonlinear mixing function $f$ from $\mathbf{s}\in\R^d$ latent independent components: \(\mathbf{x} = f(\mathbf{s})\), where $p(\mathbf{s}) = \prod_{i=1}^dp(\mathbf{s}^{(i)})$. \citet{halva_disentangling_2021} make additional assumptions on stationarity (for any $t$, $t^\prime$, then $(\mathbf{s}^{(i)}_t)_{1\leq i\leq d}$ and $(\mathbf{s}^{(i)}_{t^{\prime}})_{1\leq i\leq d}$ are the same), conditional component independence, the nonlinear mixing function's injectivity and i.i.d. noise at each timestep, to propose Structured Nonlinear ICA (SNICA), where for independent latent components $\mathbf{s}_t = (\mathbf{s}_t^{(1)}, \ldots, \mathbf{s}_t^{(d)})$, an observed $\mathbf{x}_t = f(\mathbf{s}_t)+\varepsilon_t$. Works proposing a structured VAEs which learns following a nonlinear ICA framework exist \citep{khemakhem2020variational, brehmer2022weakly}. \citet{connor2021variational} propose a VAE architecture with a learned latent structure.

%% file: sections/theory.tex
Recent advances in nonlinear ICA provide a principled framework for learning structured and identifiable latent representations in high-dimensional settings. In contrast to classical latent-variable models, nonlinear ICA leverages auxiliary structure (specifically for our setting; temporal dependencies and covariates) to ensure identifiability of the latent components. Each latent component exhibits its own dependency structure, while preserving conditional independence across components. 

We introduce a nonlinear ICA perspective to introduce structured latent variables as a lightweight probabilistic interface for guiding pretrained foundation models. We call this framework \textbf{L}atent \textbf{I}nference-\textbf{Ti}me \textbf{G}uidance for \textbf{TSFM}s (\algo). The model is designed to operate at inference time without retraining the underlying foundation models. By modelling latent variables as time-dependent processes, the framework naturally captures sequential dependencies and allows for adaptive conditioning based on past observations or covariates.

This formulation is motivated by recent advances in nonlinear ICA, such as identifiable constructions \cite[$\Delta$-SNICA]{halva_disentangling_2021} and by the growing use of foundation models, for which lightweight probabilistic structures are needed to enable principled conditioning and guidance without retraining large-scale generative systems.

\textbf{Notation.}
Consider $\mathbf{y}_{1:T} \in\R^{T\times m}$ a sequence of observations. Let $\mathbf{x}_{1:T} \in\R^{T\times d}$ be a sequence of covariates and $ \mathbf{F} = (\mathbf{F}_{1:T}^j)_{1\leq j \leq N} \in \R^{N\times T\times m}$ be a set of outputs of pre-trained foundation models. For all $t\geq 1$, we write $\mathbf{F}_t =F(\mathbf{x}_{1:t})$ the vector containing the predictions given by all foundation models at time $t$. We also write $\mathcal{N}(\cdot;\mu,\Sigma)$ for the Gaussian probability density function with mean $\mu$ and variance $\Sigma$.
 
Consider the following nonlinear ICA model \algo
\begin{align*}
y_t &= h_\theta(\mathbf{F}_t, z_t) + \varepsilon_t   \in \mathbb{R}^m \\
z_{t+1,i} &= f_\theta(x_{t+1},z_{t,i}) +  \eta_t \in \mathbb{R}\eqsp,
\end{align*}
with $z_t = (z_{t,i})_{1\leq i \leq p}, z_t\in\R^{d_\ell}$ the independent latent components, $(\varepsilon_t)_{t\geq 1}$ and $(\eta_t)_{t\geq 1}$ are i.i.d. with  $\varepsilon_t \sim \mathcal{N}(0,\Sigma)$ and $\eta_t \sim \mathcal{N}(0, \rho^2)$. Since \algo is a nonlinear ICA model in which latent dynamics and observations are jointly parametrised and learned, its transition kernel $p_\theta(z_t \mid z_{t-1}, x_t)$ is modelled through a parametric map $f_\theta$ and the emission distribution $p_\theta(y_t \mid z_t)$ through a \emph{decoder} $h_\theta$.

The latent prior factorises as
\[
p_\theta(\mathbf{z}_{1:T}|\mathbf{x}_{1:T}) = p_\theta(z_1)\prod_{t=2}^T\prod_{i=1}^p p_\theta(z_{t,i} \mid z_{t-1,i},x_t),
\]
with Gaussian parameterisations $p_\theta(z_1) = \mathcal{N}(z_1;\mu_1, \rho_1^2)$ and $p_\theta(z_{t,i} \mid z_{t-1,i}) = \mathcal{N}(z_{t,i};f_\theta(x_t, z_{t-1,i}), \rho^2)$ for each latent component $p$ and time step $t$. The observation model satisfies
\[
p_\theta(\mathbf{y}_{1:T} \mid \mathbf{z}_{1:T}) = \prod_{t=1}^T p_\theta(y_t \mid z_t), 
\quad 
p_\theta(y_t \mid z_t) = \mathcal{N}(y_t;h_\theta(\mathbf{F}_t, z_t), \Sigma).
\]

To approximate the unknown posterior distribution, we consider a structured variational family:
\[
q_\varphi(\mathbf{z}_{1:T} \mid \mathbf{y}_{1:T}, \mathbf{x}_{1:T})
= q_{\varphi,T}(z_T \mid \mathbf{y}_{1:T}, \mathbf{x}_{1:T})\prod_{t=1}^{T-1} q_{\varphi,t|t+1}(z_{t} \mid z_{t+1}, \mathbf{y}_{1:t}, \mathbf{x}_{1:t})\eqsp.
\]
A standard setting assumes that all conditional distributions are Gaussian with neural-network parametrisations. For instance, we may choose 
\begin{align}
\label{eq:enc}
    q_{\varphi,T}(z_T \mid \mathbf{y}_{1:T}, \mathbf{x}_{1:T}) & = \mcn(z_T;\mu_\varphi(\mathbf{y}_{1:T}, \mathbf{x}_{1:T}),\Sigma_\varphi(\mathbf{y}_{1:T}, \mathbf{x}_{1:T})) \\
    q_{\varphi,t|t+1}(z_{t}\mid z_{t+1}, \mathbf{x}_{1:t}, \mathbf{y}_{1:t}) &= \mcn(z_{t};\mu_{\varphi, t}(z_{t+1},\mathbf{x}_{1:t}, \mathbf{y}_{1:t}),\Sigma_{\varphi, t}(z_{t+1},\mathbf{x}_{1:t}, \mathbf{y}_{1:t}))\eqsp, \notag
\end{align}
where $\mu_{\varphi}$, $\mu_{\varphi, t}$, $\Sigma_{\varphi}$, $\Sigma_{\varphi, t}$ are neural networks chosen depending on the applications. 

The model is trained by maximising the ELBO given by \eqref{eq:elbo} derived from the above variational decomposition. Rather than relying on its explicit closed form, we emphasise that the ELBO admits a decomposition into three interpretable contributions: a reconstruction term associated with the observation model, a dynamical consistency term induced by the latent transition, and a regularisation term corresponding to the divergence between the variational posterior and the prior. This allows to derive an explicit loss function: the derivation is detailed in Appendix~\ref{subapp:elbo}.

\textbf{Inference and forecasting.}
Once the model is trained, prediction and sampling rely on propagating the learned latent dynamics under the generative model $p_\theta$, optionally combined with variational approximations. The predictive distribution is given by
\[
p_\theta(y_{t+1} \mid \mathbf{y}_{1:t}) 
= \int p_\theta(y_{t+1} \mid z_{t+1}) \, p_\theta(z_{t+1} \mid \mathbf{y}_{1:t}) \, \mathrm{d}z_{t+1},
\]
where
\[
p_\theta(z_{t+1} \mid \mathbf{y}_{1:t}) 
= \int p_\theta(z_{t+1} \mid z_t, x_{t+1}) \, p_\theta(z_t \mid \mathbf{y}_{1:t}) \, \mathrm{d}z_t.
\]
A practical approximation consists in initialising $z_t \sim q_\varphi(\cdot \mid \mathbf{y}_{1:t},\mathbf{x}_{1:t})$, and propagating forward using the generative dynamics. To improve forecasting without retraining $p_\theta$, one can refine the variational approximation by using  a VAMP-type adaptation of the posterior at inference time \citep{tomczak_vae_2018}. Further details are given in Appendix~\ref{subapp:forecasting}.

\textbf{Identifiability results. }
We now discuss identifiability of the latent guidance mechanism. Recall that, conditionally on the
outputs of the foundation models, the observation equation of \algo can be written as
\[
    y_t = s_t + \varepsilon_t,
    \qquad \textrm{where} \;
    s_t = h_\theta(\mathbf{F}_t,z_t)\eqsp.
\]

In the following, all statements are made conditionally on $\sigma(\mathbf{F_t},\ t\geq 1)$.

\begin{hypA}
    \label{assum:bounded:F}
    $(\mathbf{F}_t)_{t\geq 1}$ are assumed to be independent of
$(z_t,\varepsilon_t)_{t\geq 1}$, and bounded.
\end{hypA}

The identifiability question is twofold. First, we ask whether the denoised signal
\((s_t)_{t\geq 1}\) is identifiable from the noisy observations \((y_t)_{t\geq 1}\).
Then, we ask whether the latent ICA components \(z_t\) are identifiable from \(s_t\) and
\(\mathbf{F}_t\). The first question follows from the noisy Structured Nonlinear ICA argument of
\cite{halva_disentangling_2021}, while the second requires additional conditions on the decoder
\((\mathbf{F}_t,z_t)\mapsto h_\theta(\mathbf{F}_t,z_t)\). Following \cite{halva_disentangling_2021}, we consider the conditions on the statistical properties of the signal for some $t_2>t_1\geq 1$.

\begin{hypA}
\label{assum:light:tail}
For some $\rho<3$, there exist constants $A,B>0$ such that, for all $\lambda\in\mathbb R^m$,
$
\mathbb{E}\left[\exp(\langle \lambda,s_{t_1}\rangle)\right] \leq A\exp(B\|\lambda\|^\rho)\eqsp.
$
\end{hypA}

\begin{hypA}
\label{assum:non:degenerate}
For all $\eta\in\mathbb C^m$,  the random variable $\mathbb E[\exp(\langle \eta,s_{t_2}\rangle)\mid s_{t_1}]$ is not almost surely equal to zero.
\end{hypA}

\begin{hypA}
\label{assum:non:gaussian}
There does not exist $\eta\in\mathbb R^m$ and independent random variables $\tilde s,u$ such that $u$ is a non-degenerate Gaussian random variable and
$
\langle \eta,s_{t_1}\rangle \overset{d}{=} \tilde s+u\eqsp.
$
\end{hypA}

\begin{proposition}[\citealp{halva_disentangling_2021}]
\label{prop:ident:signal}
Assume that A\ref{assum:bounded:F}--A\ref{assum:non:gaussian} hold. Then, up to translation, for all $k>2$ and all
$(t_3,\ldots,t_k)$, the map associating the distribution of $(s_{t_1},\ldots,s_{t_k})$
to the distribution of $(y_{t_1},\ldots,y_{t_k})$
is one-to-one. In particular, the distribution of the denoised latent guidance signal is identifiable
from the noisy observations.

\end{proposition}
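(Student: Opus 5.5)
The plan is to work conditionally on $\sigma(\mathbf{F}_t,\ t\geq 1)$ and reduce the statement to a deconvolution problem: $(y_{t_1},\ldots,y_{t_k})$ is the sum of $(s_{t_1},\ldots,s_{t_k})$ and an independent Gaussian vector whose law is only partly known. By A\ref{assum:bounded:F}, the $\mathbf{F}_t$ are independent of $(z_t,\varepsilon_t)$. Hence, conditionally on the foundation-model outputs, $s_t=h_\theta(\mathbf{F}_t,z_t)$ is a measurable function of $z_t$ alone. The noise $(\varepsilon_t)$ is i.i.d. $\mathcal{N}(0,\Sigma)$ and independent of $(z_t)$, so it remains independent of $(s_t)$. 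We therefore sit exactly in the noisy structured nonlinear ICA setting of \cite{halva_disentangling_2021}, with signal $s_t$ and noise $\varepsilon_t$. Boundedness of $\mathbf{F}_t$ is used to transfer moment-generating bounds, such as A\ref{assum:light:tail}, uniformly in the conditioning.

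The key steps follow the characteristic-function argument of that paper, first for the pair $(t_1,t_2)$.

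\textbf{Step 1 (reduction to a functional equation).} Suppose two signal laws $s,s'$ with Gaussian noise covariances $\Sigma,\Sigma'$ produce the same law of $(y_{t_1},y_{t_2})$. Then their joint characteristic functions satisfy
\[
\phi_{s}(u_1,u_2)\,e^{-\frac12 (u_1^\top\Sigma u_1+u_2^\top\Sigma u_2)}=\phi_{s'}(u_1,u_2)\,e^{-\frac12 (u_1^\top\Sigma' u_1+u_2^\top\Sigma' u_2)}.
\]

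\textbf{Step 2 (analytic extension).} A\ref{assum:light:tail} with exponent $\rho<3$ makes $\lambda\mapsto\mathbb{E}[\exp\langle\lambda,s_{t_1}\rangle]$ entire, of order at most $\rho$. The same then holds for the joint moment-generating function. We can therefore extend the identity to all of $\mathbb{C}^{2m}$.

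\textbf{Step 3 (isolating the Gaussian factor).} Condition on $s_{t_1}$ and use A\ref{assum:non:degenerate}: the conditional transform $\mathbb{E}[\exp\langle\eta,s_{t_2}\rangle\mid s_{t_1}]$ does not vanish identically. This lets us divide and compare the two sides as functions of the $t_1$-variable. Taking ratios and logarithms, the discrepancy $\Sigma-\Sigma'$ appears as a quadratic exponential factor. The order bound $\rho<3$, combined with Hadamard factorisation, forces any nonvanishing entire factor to be the exponential of a polynomial of degree at most $2$. The factor is thus Gaussian up to a linear term, and the linear term is what produces the ``up to translation'' ambiguity.

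\textbf{Step 4 (ruling out a Gaussian component).} If $\Sigma\neq\Sigma'$, some projection $\langle\eta,s_{t_1}\rangle$ (or of $s'$) would carry an independent non-degenerate Gaussian summand, because the extra Gaussian factor could be absorbed into the signal. A\ref{assum:non:gaussian} rules this out. Hence $\Sigma=\Sigma'$, and dividing out the Gaussian characteristic function gives $\phi_s=\phi_{s'}$ up to the linear phase, which is the translation.

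\textbf{Step 5 (extension to $k>2$ times).} Once $\Sigma$ is identified, the noise law is fully known. The Gaussian characteristic function $\prod_j e^{-\frac12 u_j^\top\Sigma u_j}$ never vanishes, so it can be divided out of $\phi_{(y_{t_1},\ldots,y_{t_k})}$ for any $(t_3,\ldots,t_k)$. This recovers $\phi_{(s_{t_1},\ldots,s_{t_k})}$, up to the same translation, which gives injectivity of the map.

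I expect Steps 3 and 4 to be the main obstacle. Showing that equality of the convolved laws forces equality of the noise covariances needs the complex-analytic argument, namely entire functions of order below $3$ and Marcinkiewicz-type results on when a nonvanishing entire characteristic function must be Gaussian. It must also be done carefully for the conditional transform in A\ref{assum:non:degenerate}. My first attempt would be to invoke the corresponding theorem of \cite{halva_disentangling_2021} directly, after checking that A\ref{assum:light:tail}--A\ref{assum:non:gaussian} match its hypotheses verbatim once we condition on the $\mathbf{F}_t$. I would reprove the Marcinkiewicz step only if the conditioning breaks the required uniformity; boundedness of $\mathbf{F}_t$ is meant to prevent exactly that.
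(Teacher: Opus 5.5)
Your proof is correct, but only through Steps 1, 4 and 5. It takes a genuinely different, and much more elementary, route than the one the paper relies on.

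\textbf{The paper's route.} The paper gives no argument of its own. Via A\ref{assum:bounded:F}, it treats the model, conditionally on the $\mathbf{F}_t$, as a noisy SNICA model and invokes \cite{halva_disentangling_2021}. That result is a deconvolution argument for noise of \emph{unknown} law, assumed only independent across time. The unknown noise is eliminated through the identity $\phi_s(u_1,u_2)\phi_{s'}(u_1,0)\phi_{s'}(0,u_2)=\phi_{s'}(u_1,u_2)\phi_s(u_1,0)\phi_s(0,u_2)$ near $0$. A\ref{assum:light:tail} extends this to an identity between entire functions. A\ref{assum:non:degenerate}, with Hadamard factorisation at order $<3$, then shows that $\phi_s(\cdot,0)/\phi_{s'}(\cdot,0)$ is the exponential of a polynomial of degree at most $2$. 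Its linear part is the genuine translation ambiguity, since the noise mean is unknown, and A\ref{assum:non:gaussian} removes the quadratic part.

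\textbf{Your route.} You use instead that the model's noise is $\mathcal{N}(0,\Sigma)$, so the ratio of characteristic functions is explicitly $\exp(\tfrac12\sum_j u_j^\top(\Sigma-\Sigma')u_j)$. Step 4, made precise, goes as follows. If $\Sigma\neq\Sigma'$, polarisation gives $\eta$ with $c=\eta^\top(\Sigma-\Sigma')\eta\neq 0$. Take $u_1=v\eta$, $u_2=0$ in Step 1. For $c>0$ this yields $\langle\eta,s'_{t_1}\rangle\overset{d}{=}\langle\eta,s_{t_1}\rangle+\mathcal{N}(0,c)$ with independent summands, which contradicts A\ref{assum:non:gaussian} for $s'$; swap the roles if $c<0$. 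Hence $\Sigma=\Sigma'$, and Step 5 finishes.

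\textbf{What each route buys.} Yours uses neither A\ref{assum:light:tail}, A\ref{assum:non:degenerate}, nor the boundedness of $\mathbf{F}_t$. It also identifies the signal law exactly, not merely up to translation. The cited route, in exchange, covers arbitrary noise laws, non-Gaussian and non-centred. That is what A\ref{assum:light:tail}--A\ref{assum:non:degenerate} and the phrase ``up to translation'' in the statement are designed for.

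\textbf{Drop Steps 2--3.} As written they are not right, and you do not need them. A\ref{assum:light:tail} controls only $s_{t_1}$, so your bound on the joint moment-generating function needs a matching bound on $s_{t_2}$. In \cite{halva_disentangling_2021} that bound comes from stationarity of the signal. Stationarity is not assumed here and generally fails conditionally on time-varying $\mathbf{F}_t$. Also, with two centred Gaussian noises no linear term appears anywhere. Your explanation of the translation ambiguity as a ``linear phase'' therefore does not apply to the setting you set up, and you do not need Hadamard factorisation either.
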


We now derive a reconstruction guarantee. Following \cite{JMLR:v25:22-1392},  our proposed structured ICA model provides reconstruction guarantees within a state-space modeling framework. In this setting,  under a mixing-model assumption and assuming that the learned variational distribution is close in total variation distance to the true conditional distribution,  yields a reconstruction risk that grows at most linearly with the time horizon. 
Let $\phi^\theta_{1:T}$ denote the true smoothing distribution of $\mathbf{z}_{1:T}$ given
$(\mathbf{y}_{1:T},\mathbf{x}_{1:T})$ and write $\phi^\theta_{t}$  the true filtering distribution of $z_{t}$ given
$(\mathbf{y}_{1:t},\mathbf{x}_{1:t})$ for $1\leq t\leq T$. For all $2\leq t \leq T$ and let $b^\theta_{t-1|t}(\cdot\mid z_{t+1}, \mathbf{x}_{1:t}, \mathbf{y}_{1:t})$ be the backward Markov kernel at time $t$:
$$
b^\theta_{t-1|t}(z_{t-1}\mid z_{t}, \mathbf{x}_{1:t}, \mathbf{y}_{1:t}) \propto p_\theta(z_{t}|z_{t-1},x_{t})\phi^\theta_{t-1}(z_{t-1}|\mathbf{y}_{1:t-1}, \mathbf{x}_{1:t-1})\eqsp.
$$
Therefore, 
$\phi^\theta_{1:T}=\phi^\theta_T\prod_{t=1}^{T-1}b^\theta_{t\mid t+1}$, which matches
the factorisation of the variational family
$q_\varphi=q_{\varphi,T}\prod_{t=1}^{T-1}q_{\varphi,t\mid t+1}$.
Consider the following  assumptions.

\begin{hypB}
\label{assum:bounded:h:obs}
There exist constants $c_1, c_2$ such that for all $t\geq 1$, $\theta$, $\mathbf{F}_t$, $z_t$,
$$
\|h_\theta(\mathbf{F}_t,z_t)\|\leq c_1, \quad |y_t|\leq c_2\eqsp.
$$
\end{hypB}

\begin{hypB}
\label{assum:strong:mixing}
The proposed state-space model and the variational density 
satisfy a uniform mixing condition. There exist $0<\sigma_-<\sigma_+<\infty$ such that for all $1\leq t \leq T-1$, $\theta$, $\varphi$, $\mathbf{z}_{1:T}$, $\mathbf{x}_{1:T}$, $\mathbf{y}_{1:T}$,
$$
\sigma_-\leq p_\theta(z_{t+1}|z_t,x_{t+1})p_\theta(y_{t+1}|z_{t+1}) \leq \sigma_+ \; \mathrm{and}\;  \sigma_-\leq  q_{\varphi,t|t+1}(z_{t}\mid z_{t+1}, \mathbf{x}_{1:t}, \mathbf{y}_{1:t}) \leq \sigma_+\eqsp.
$$
\end{hypB}

Assumption B\ref{assum:strong:mixing} is common in the nonlinear state space literature to obtain quantitative bounds for the approximation and control of joint smoothing distributions. This assumption does not hold on an unbounded state space. It can usually be satisfied by considering truncated latent spaces. In addition, note that
compactly supported innovation law ensure A\ref{assum:light:tail} with
$\rho<2$.

\begin{hypB}
\label{assum:variational:precision}
There exists $\varepsilon>0$ such that
$$
\|q_{\varphi,T}(\cdot | \mathbf{y}_{1:T}, \mathbf{x}_{1:T})-\phi^\theta_{T}(\cdot|\mathbf{y}_{1:T}, \mathbf{x}_{1:T})\|_{\mathrm{TV}}\leq \varepsilon,
$$
and, for all $2\leq t\leq T$ and all $z_t$,
$$
\|q^\phi_{t-1|t}(\cdot \mid z_{t}, \mathbf{x}_{1:t}, \mathbf{y}_{1:t})-b^\theta_{t-1|t}(\cdot\mid z_{t}, \mathbf{x}_{1:t}, \mathbf{y}_{1:t})\|_{\mathrm{TV}} \leq \varepsilon,
$$
where \(b^\theta_{t-1|t}\) is the backward smoothing kernel.
\end{hypB}

Proposition~\ref{prop:reconstruction} turns a local variational approximation guarantee into a global smoothing guarantee for our latent mixture of foundation models. If the backward variational kernel are uniformly close in total variation to their exact smoothing counterparts, with local error at most $\varepsilon$, then the error on additive smoothing functionals, and classical reconstruction errors, grow at most linearly with the time horizon. 
\begin{proposition}
\label{prop:reconstruction}
Assume that B1--B3 hold. Let
\[
    \ell_t(z_t)=\|y_t-h_\theta(\mathbf{F}_t,z_t)\|^2.
\]
Then there exists a constant $C>0$ such that
\[
    \left|
        \frac1T\sum_{t=1}^T
        \mathbb E_{q_{\varphi}(\cdot | \mathbf{y}_{1:T}, \mathbf{x}_{1:T})}[\ell_t(z_t)]
        -
        \frac1T\sum_{t=1}^T
        \mathbb E_{\phi^\theta_{1:T}(\cdot | \mathbf{y}_{1:T}, \mathbf{x}_{1:T})}[\ell_t(z_t)]
    \right|
    \leq
    C\varepsilon\eqsp.
\]
The excess reconstruction risk of the variational smoother with respect to
the oracle smoother is controlled linearly by the approximation error of the backward variational kernels.
\end{proposition}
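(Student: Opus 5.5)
The plan is to follow the backward-kernel error decomposition of \cite{JMLR:v25:22-1392} (Gloaguen et al.\ / Chagneux et al.\ type argument). Write $\mathcal{L}_q$ and $\mathcal{L}_\phi$ for the two averaged expectations, and set $S_T(\mathbf{z}_{1:T})=\sum_{t=1}^T\ell_t(z_t)$, an additive functional. By B\ref{assum:bounded:h:obs}, each term satisfies $0\leq \ell_t\leq (c_1+c_2)^2=:L$, so $\|\ell_t\|_\infty$ is uniformly bounded. Both joint laws share the same backward structure, $q_\varphi=q_{\varphi,T}\prod_{t}q_{\varphi,t|t+1}$ and $\phi^\theta_{1:T}=\phi^\theta_T\prod_t b^\theta_{t|t+1}$. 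I would therefore telescope between them by replacing one kernel at a time, defining hybrid measures
\[
\mu_k=q_{\varphi,T}\,q_{\varphi,T-1|T}\cdots q_{\varphi,k|k+1}\,b^\theta_{k-1|k}\cdots b^\theta_{1|2}\eqsp,
\]
with $\mu_T$ replacing the terminal law as well.

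\textbf{Step 1 (one-swap error).} The difference $\mathbb{E}_{q}[S_T]-\mathbb{E}_{\phi}[S_T]$ is a sum of $T$ terms. Each term differs only in one kernel at time $k$, followed by the common backward kernels $b^\theta$ down to time $1$. Conditionally on $z_{k+1}$, the $k$-th term equals
\[
(q_{\varphi,k|k+1}-b^\theta_{k|k+1})\big[\textstyle\sum_{t\le k}\mathcal{B}_{k,t}\ell_t + \text{(terms independent of }z_k)\big]\eqsp,
\]
where $\mathcal{B}_{k,t}=b^\theta_{k-1|k}\cdots b^\theta_{t|t+1}$. The terms in $\ell_t$ with $t>k$ do not depend on $z_k$, so they cancel.

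\textbf{Step 2 (forgetting).} I will bound the oscillation of $\mathcal{B}_{k,t}\ell_t$ in $z_k$. Under B\ref{assum:strong:mixing}, each backward kernel $b^\theta_{t|t+1}$ satisfies a Doeblin minorisation. Indeed, $b^\theta_{t|t+1}(z_t\mid z_{t+1})\propto p_\theta(z_{t+1}|z_t,x_{t+1})\phi_t(z_t)$, and the transition density is bounded between $\sigma_\pm$ after including the emission factor. This gives a Dobrushin coefficient at most $\kappa=1-\sigma_-/\sigma_+<1$. Hence
\[
\mathrm{osc}(\mathcal{B}_{k,t}\ell_t)\le \kappa^{k-t}\,\mathrm{osc}(\ell_t)\le \kappa^{k-t}L\eqsp.
\]
Combining this with the total-variation bound of B\ref{assum:variational:precision}, via $|(\mu-\nu)f|\le\|\mu-\nu\|_{\mathrm{TV}}\mathrm{osc}(f)$, the $k$-th term is at most $\varepsilon L\sum_{t\le k}\kappa^{k-t}\le \varepsilon L/(1-\kappa)$. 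The terminal swap $q_{\varphi,T}$ versus $\phi^\theta_T$ is handled identically.

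\textbf{Step 3 (summing).} Summing over the $T$ swaps gives
\[
|\mathbb{E}_q S_T-\mathbb{E}_\phi S_T|\le T\varepsilon L\sigma_+/\sigma_-\eqsp.
\]
Dividing by $T$ yields the claim with $C=(c_1+c_2)^2\sigma_+/\sigma_-$.

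\textbf{Main obstacle.} The delicate step is Step 2: extracting a uniform Dobrushin contraction for the true backward kernels from B\ref{assum:strong:mixing}. The minorisation must hold uniformly in the filtering law $\phi_t$. This requires that $p_\theta(z_{t+1}|z_t,\cdot)$ be bounded above and below as a function of $z_t$, which presupposes a compact latent space of finite reference measure. I would first try to obtain it from the ratio bound $\sigma_-/\sigma_+$ on the product $p_\theta(z_{t+1}|z_t)p_\theta(y_{t+1}|z_{t+1})$, since the emission factor cancels in the normalisation over $z_t$. If that fails, a fallback uses the assumed bounds on $q_{\varphi,t|t+1}$ and forgets through the variational kernels instead, running the telescoping in the reverse order. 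A crude alternative avoids forgetting altogether and gives the bound $\varepsilon L\cdot k$ per term, which is quadratic in $T$ and insufficient. So the mixing argument is essential.
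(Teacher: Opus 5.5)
Your proof is correct, but it takes a different route from the paper. The paper does not redo the telescoping. It invokes Proposition~1 of \cite{JMLR:v25:22-1392} as a black box, which bounds the additive-functional error by $2\frac{\sigma_+}{\sigma_-}\sum_t\|\ell_t\|_\infty$ times a two-sided geometric series $\sum_m\rho^{|m-k|}c_m$. Here the local errors are $c_m=\|\tilde\phi^\theta_{m-1|m}-\tilde\nu^\varphi_{m-1|m}\|_{\mathrm{TV}}$, built from arbitrary reference densities $\tilde q_m$. The paper then chooses $\tilde q_m=\phi^\theta_m$ and uses the identity $\phi^\theta_{m-1}(z_{m-1})p_\theta(z_m|z_{m-1})p_\theta(y_m|z_m)/Z=\phi^\theta_m(z_m)\,b^\theta_{m-1|m}(z_{m-1}|z_m)$ together with a triangle inequality. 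This gives $c_1=0$ and $c_m\leq\varepsilon$ by B3, and $\|\ell_t\|_\infty$ is bounded via B1.

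Your kernel-by-kernel telescoping is more elementary and self-contained. Because B3 is uniform in the conditioning variable $z_{k+1}$, the law of the upper (variational) part of each hybrid measure never matters. You therefore only need forgetting of the true backward kernels, which uses only the model half of B2. It also gives an explicit constant $(c_1+c_2)^2\sigma_+/\sigma_-$. The paper's final $C$ drops the geometric-series factor and inverts the ratio. The statement only asserts existence of $C$, so this is harmless.

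The cited result buys generality instead. It allows integrated, non-uniform local errors and arbitrary $\tilde q_m$, at the price of two-sided forgetting. That is why the paper's route also needs the mixing bounds on $q_{\varphi,t|t+1}$ in B2.

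On your ``main obstacle'': your first attempt works exactly as you describe and needs nothing beyond B2. Multiply numerator and denominator of $b^\theta_{t|t+1}(z_t|z_{t+1})$ by $p_\theta(y_{t+1}|z_{t+1})$. B2 then gives $b^\theta_{t|t+1}(\cdot|z_{t+1})\geq(\sigma_-/\sigma_+)\,\phi^\theta_t(\cdot)$ uniformly in $z_{t+1}$. This is a Doeblin minorisation by a probability measure, so the Dobrushin coefficient is at most $1-\sigma_-/\sigma_+$. No separate compactness argument is needed, since B2 already forces a bounded latent space, as the paper notes. The fallback through the variational kernels is unnecessary.
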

\begin{proof}
    The proof is deferred to \Cref{subapp:pf_reconstruction}.
\end{proof}
Proposition~\ref{prop:reconstruction} bounds the gap between the variational
reconstruction risk and the risk of the oracle smoother. The linear
growth in $T$ of the unnormalised bound is important: it shows that structured backward variational families can preserve the stability properties of exact smoothing and that local approximation errors do not compound exponentially. 

%% file: sections/experiments.tex
To evaluate the performance of our proposed framework \algo, we implement it on datasets covering multiple domains, frequencies and levels of noise.
All experiments are implemented in PyTorch and run on a MacBook Air with an M2 chip and 16GB RAM. Non-deterministic methods are each run ten times, and results are given as mean (std). We report compute cost for our method compared to its baselines for each dataset in \Cref{app:experiments}. The code is available on GitHub\footnote{\url{https://anonymous.4open.science/r/LITiG-TSFM-77FD/README.md}}.

\textbf{Metrics.} To evaluate the reconstruction and forecasting tasks, we use the Root Mean Square Error and the Mean Absolute Error. Their expressions are given in \Cref{app:experiments}. 

\textbf{Datasets. } To evaluate the \algo framework, we evaluate its forecasting capacities on datasets from multiple domains and at a variety of frequencies. This provides a wealth of model behaviours with varying levels of noise and seasonality, and from hourly to monthly timesteps. Further information on these datasets and how they are preprocessed is available in Appendix~\ref{app:experiments}.
\begin{table}[ht]
\centering
\small
\begin{tabular}{
>{\columncolor[HTML]{FFFFFF}}c 
>{\columncolor[HTML]{FFFFFF}}c 
>{\columncolor[HTML]{FFFFFF}}c 
>{\columncolor[HTML]{FFFFFF}}c 
>{\columncolor[HTML]{FFFFFF}}c 
>{\columncolor[HTML]{FFFFFF}}c }
\textbf{Dataset}         & \textbf{Domain} & \textbf{Freq} & \textbf{Length} & \textbf{\# Covariates} & \textbf{\# Series} \\ \hline
London Smart Meter \citep{SmartMeter_dataset}       & Energy          & D              & 606             & 17                            & 5                         \\ \hline
Mauna Loa \citep{maunaloa_dataset}                & Climate         & M            & 627             & 9                             & 1                         \\ \hline
NN5 Weekly \citep{godahewa2020nn5}               & Finance         & W             & 837             & 4                             & 111                       \\ \hline
Washington Bicycle Share \citep{Bike_dataset} & Traffic         & H             & 17378           & 30                            & 1                        
\end{tabular}
\caption{Datasets used to evaluate \algo and their key characteristics.}
\label{tab:datasets}
\end{table}

\subsection{Baselines}
\subsubsection{Ideal Settings Baselines}
\textbf{Highest-performing TSFM context.} We prompt TabICLv2, a foundation model with strong performances in the GIFT-Eval \citep{aksu2024gifteval} and fev-bench \citep{shchur2025fev} baselines. Rather than looking to find the best TSFM and its best configuration, we choose TabICLv2 as a representative sample of the current state of the art for TSFMs and use a set of its forecasts. 
To replicate implementation conditions where performance is evaluated on a static training dataset to select a model, the best TSFM from the validation fold of the data is retained as the best TSFM, and evaluated on the test fold of the data. Further details on how the TSFMs are prompted for each dataset are given in Appendix~\ref{app:experiments}. The group of high-performing forecasts is called \emph{experts} for the rest of this work, and is the same for each of the methods implemented.

\textbf{Oracle mixture.} Consider a set of \(N\) experts \(\{\mathbf{F}_{1:T}^k\}_{k=1}^N\) for a time series \(\mathbf{y}_{1:T}\). At each time step \(1\leq t\leq T\), these experts have loss \(p_{k,t}\). The oracle forecast of these experts is given by \(\hat{y}_{t} = F_{\hat{k},t}\), where \(\hat{k} = \underset{k}{\arg\min}\, p_{k,t}\) for \( p_{k,t}\) the loss of an expert \(\mathbf{F}^k_{1:T}\) at time step \(t\). This benchmark is inaccessible at inference, since it requires having access to the ground truth to select the best expert at each time step. So we use it as a performance floor, rather than a baseline to beat.

\subsubsection{Mixture of Experts Baselines}
\textbf{Foundation models.} We use TabICLv2 raw quantiles and mean forecasts as the experts which will be aggregated. The mean and standard deviation of these forecasts is reported (under the label \emph{All TSFMs}), with the same context given for raw quantile and mean outputs. More detail on the contexts given to the TSFM is available in Appendix~\ref{app:experiments}.

\textbf{Mixture of experts.} In time series forecasting, principled mixtures of high-experts is a traditional method to improve forecasts with complementary strengths. For a set of forecasts $\mathbf{F}_t = \{F_{k,t}\}_{k=1}^K$, each with weight $\pi_{k,t}$ depending on historical performance and with $\sum_{k=1}^K \pi_{k,t}=1$, the mixture of experts prediction is given by $y_t=\sum_{k=1}^K\pi_{k,t}F_{k,t}$. The {ML-Poly} algorithm \citep{gaillard2014second} from the Opera package\footnote{\url{https://github.com/Dralliag/opera-python}} is implemented on the experts obtained by prompting TabICLv2. The ML-Poly algorithm differs from mixture-of-expert TSFMs. Indeed, the weights are learned in a linear way, without additional neural network architectures \citep{liu2025moiraimoe}; also, they do not necessitate agentic evaluation \citep{cao_conversational_2025}. This approach comes with robustness guarantees to drifts in the data distribution, as does the oracle mixture. This makes ML-Poly an attractive method for time series forecasting, where distribution drift is a common challenge.

\textbf{Kalman Filter.} A Kalman Filter updates the weights of a state space model following the equations 
\[
\mathbf{y}_{t+1:t+H} = \alpha_t \mathbf{F}_t + \beta_t + \varepsilon_t , \quad\textrm{where}\quad
    \begin{pmatrix}
        \alpha_{t+1} \\
        \beta_{t+1}
    \end{pmatrix} = A\begin{pmatrix}
        \alpha_t \\
        \beta_t
    \end{pmatrix} + \eta_t,
\]
$\alpha_t$ is the weights for the model output, $\beta_t$ is the bias of the model, 
$\varepsilon \sim \mcn(0, Q)$ and $\eta_t\sim\mcn(0, R)$. In our experiment, we fix $A={\rm Id}$ and learn $Q, R$ using an  expectation maximisation implemented in the  viking\_kalman package\footnote{\url{https://github.com/EDF-Lab/viking_kalman}}.

\subsection{Results}
For each dataset, we train \algo with the objective derived in \Cref{app:technical}. Details of the implemented neural networks are given in \Cref{app:experiments}. The encoder, decoder and prior architectures used are flexible and can be adapted to the specificities of each dataset.

\textbf{Forecasting. }
We provide results for the London Smart Meter and NN5 datasets in \Cref{fig:strong_tsfm_results}. \Cref{fig:weak_tsfm_results} provides results for the Washington Bicycle Share and Mauna Loa datasets. In both figures, the ML-Poly and Kalman Filter mean and standard deviation are reported for datasets with multiple time series; variance is not available for these models on single-series datasets, since the methods are deterministic. The oracle and best TSFM performances given for the London Smart Meter and NN5 datasets are given as the average oracle and best TSFM metrics over the series in the dataset. \Cref{fig:strong_tsfm_results} shows that when TSFM forecasts have a good performance, a mixture of experts method can still provide an improvement. Furthermore, \algo is competitive with its baseline methods.
\begin{figure}[ht]
    \centering
    \includegraphics[width=.8\linewidth]{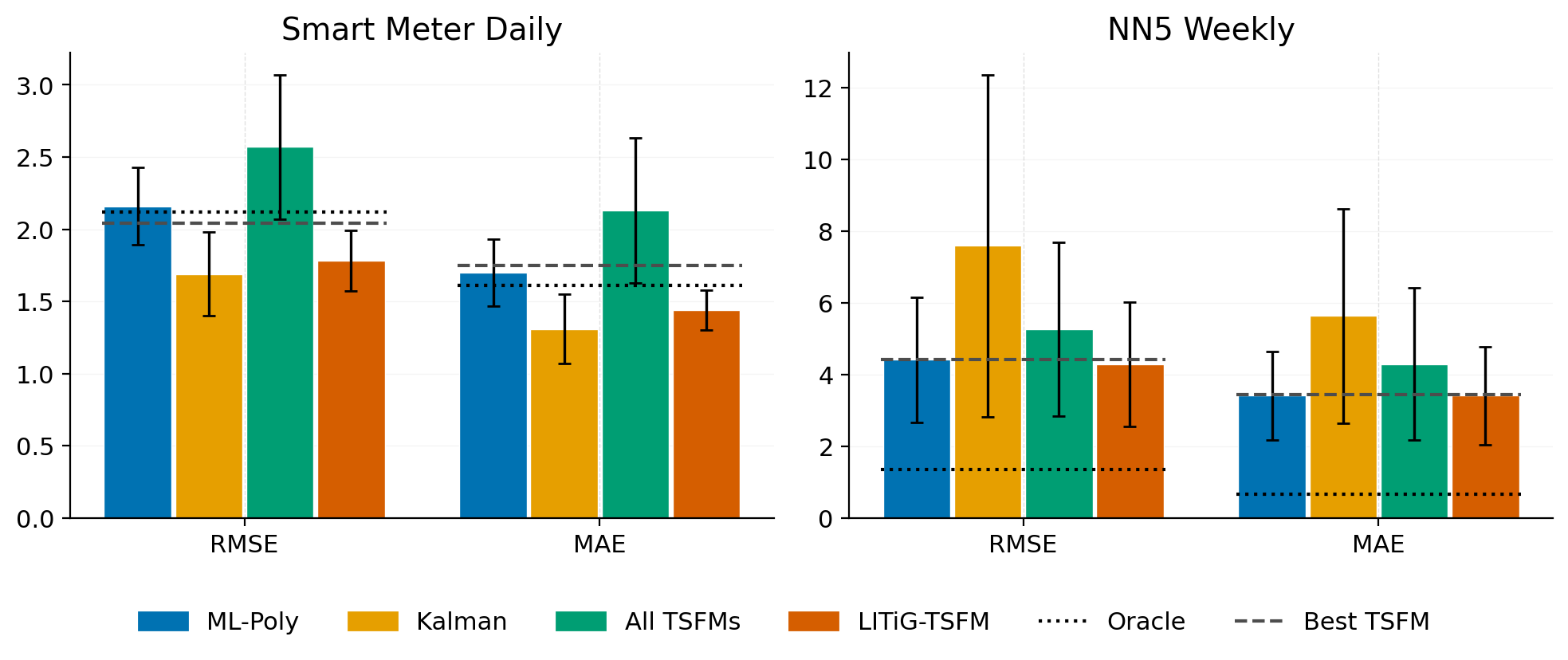}
    \caption{Average forecasting performance \(\times10^2\) on datasets with strong TSFM forecasts. Lower is better. Note that the $y$ axes are on different scales.}
    \label{fig:strong_tsfm_results}
\end{figure}

The TSFM forecasts are much weaker on the Washington Bicycle Share and Mauna Loa datasets. On the Washington Bicycle Share dataset, the experts have an average (std) RMSE of 19.45 (6.61) and an average (std) MAE of 14.70 (5.65). For the Mauna Loa dataset, the average (std) RMSE is 45.27 (17.33) and the average (std) MAE is 35.34 (13.84). Therefore, \Cref{fig:weak_tsfm_results} does not show these metrics, to compare competitive mixture of experts methods only.
For the Washington Bicycle Share dataset, the weak TSFM forecasts are due to the high frequency and levels of noise in the data. For the Mauna Loa data, the weak TSFM forecasts can be put down to the strong trend component of the data, which is monotonically increasing: the TSFM fits on the train fold of the data, and the test set is sequentially after the training fold, but the time series continues to grow beyond the domain originally seen. Whilst the TSFM can still detect the periodicity in the dataset, it will go back to the mean of the training fold, rather than correctly continue the growth in the training set. These two datasets present compelling use-cases for mixture-of-experts methods. 
\begin{figure}[ht]
    \centering
    \includegraphics[width=.8\linewidth]{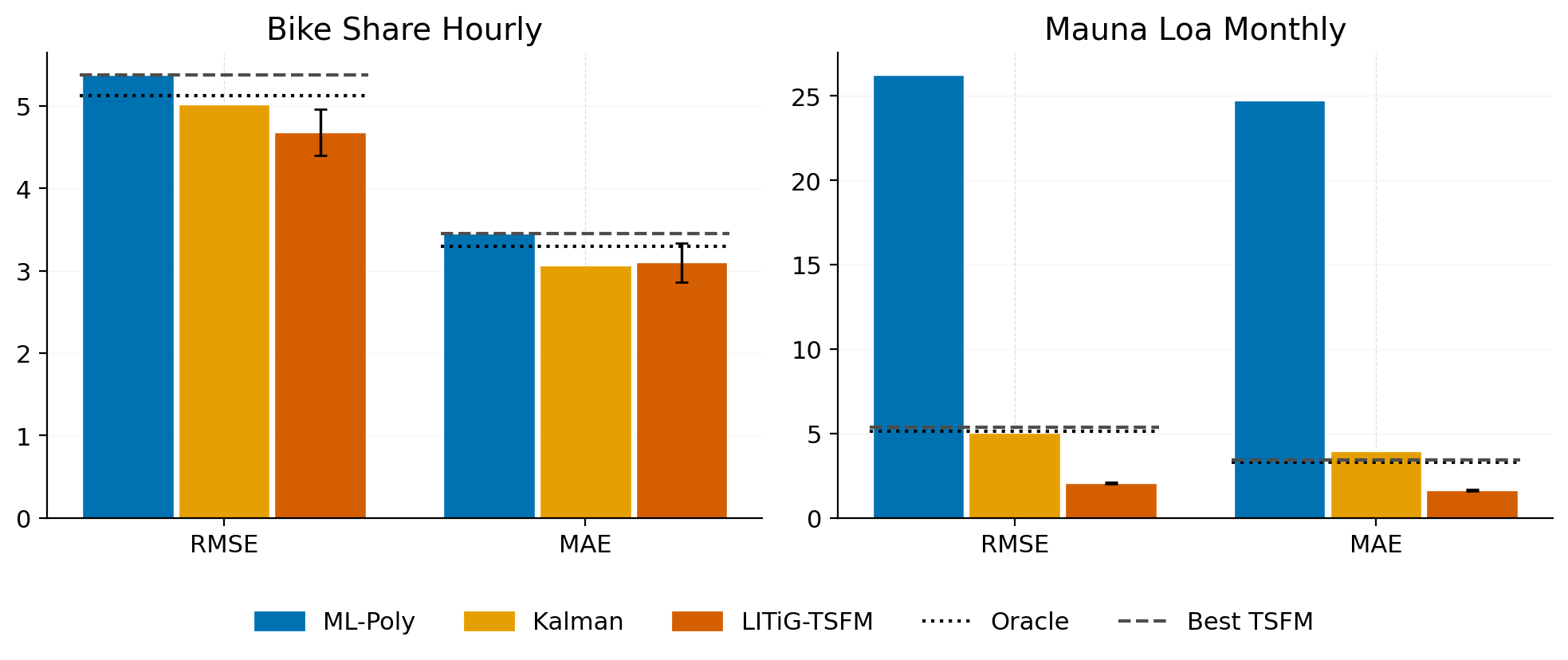}
    \caption{Average forecasting performance \(\times10^2\) on datasets with poor TSFM forecasts: for the Washington Bicycle Share dataset, mean RMSE (std) is 19.45 (6.61) and mean MAE (std) is 14.70 (5.65); for the Mauna Loa dataset, mean RMSE (std) is 45.27 (17.33) and mean MAE (std) is 35.34 (13.84). Lower is better. Note that the $y$ axes are on different scales.}
    \label{fig:weak_tsfm_results}
\end{figure}

\textbf{Expert ablation. } Due to its formulation, the ML-Poly algorithm gives a forecast contained in the convex hull given by the expert forecasts. Thus, when an oracle forecast is not particularly strong, ML-Poly can only prove to a certain extent. This motivates the use of a Kalman Filter in the first instance. \algo then intervenes to go beyond the linear setting provided by the Kalman Filter. To illustrate this, we work in a setting where the performance of the TSFMs is known. We control the quality of the experts used in the ensemble methods to study whether \algo is sensitive to poor-performing experts, and whether its performance deteriorates faster than other ensembling methods. After ranking the TSFMs' performance on the test segments, the strongest nine experts are chosen as the set of best experts; the five weakest experts are retained as the set of poor experts. The worst is taken for the setting with one poor expert, and the three worst are taken for the setting with three poor experts.  
\Cref{fig:ablation:bikeshare_experts} compares \algo to its ensembling baselines' performances when these poor experts are added to a strong set of forecasts. The error of the TSFMs grows with the addition of weaker experts; \algo stays robust to this and does not deteriorate, which makes it competitive with the ML-Poly and Kalman Filter baselines.
\begin{figure}[ht]
    \centering
    \includegraphics[width=.8\linewidth]{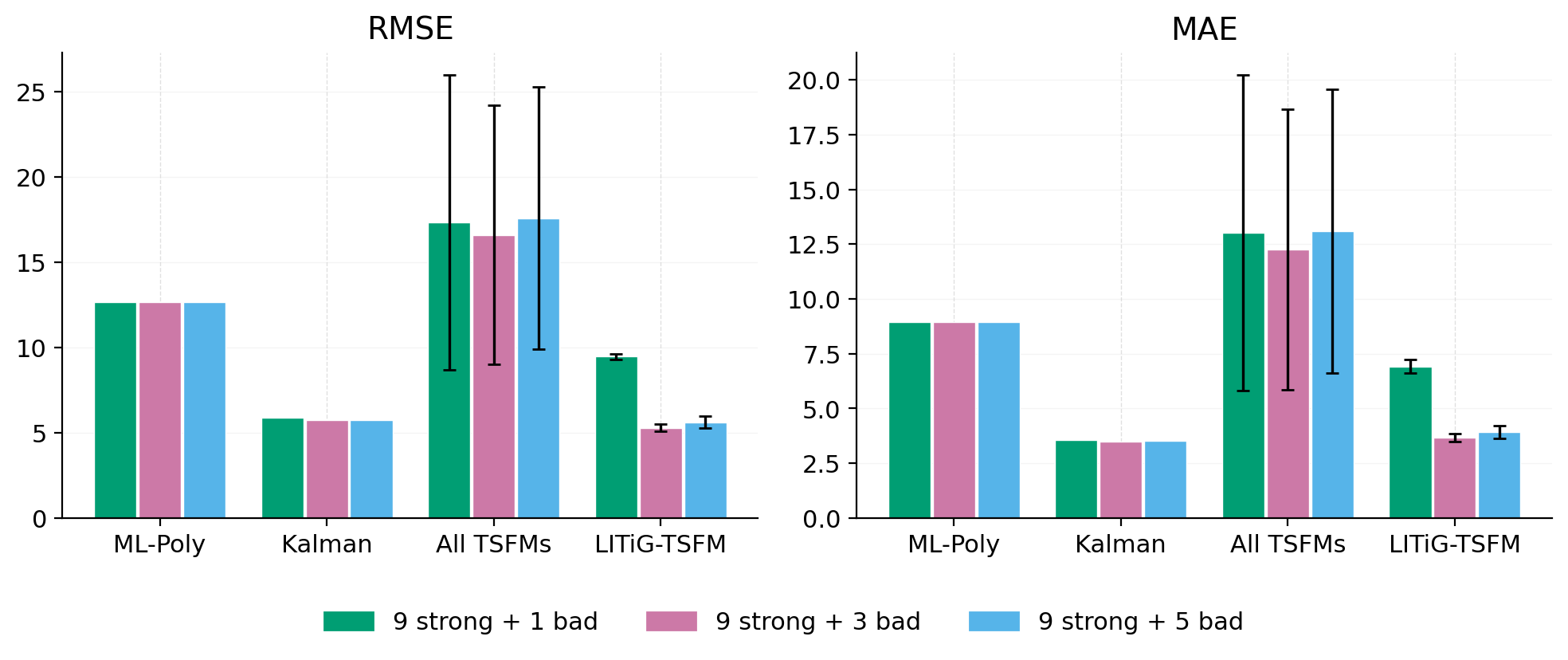}
    \caption{Experts ablation: \algo performance compared to baseline ensemble methods on the Washington Bicycle Share dataset, with nine strong experts and one, three and five poor ones respectively. Metrics given $\times10^{-2}$, lower is better.}
    \label{fig:ablation:bikeshare_experts}
\end{figure}

\Cref{app:experiments} provides a further ablation study on the architecture of the encoder and decoder, which highlights the possible gains from making relevant architectural design choices. \Cref{tab:inference_time} shows that for a set of prompted experts, running a Kalman Filter on a dataset with multiple time series or with high frequency time steps will take longer than training and prompting \algo. Furthermore, \algo does not require a warm-up window as the Kalman Filter or ML-Poly algorithms do; this means that in fewer steps at inference, the forecast becomes useful, and we do not need to discard the initial steps.

%% file: sections/discussion.tex
This work lies at the intersection of non-linear ICA, variational inference and TSFMs, providing a probabilistic inference-time guidance framework (\algo) for combining TSFM forecasts through latent structured state-space modelling. This is accompanied by existing guarantees on the latent state dynamics, and adapts reconstruction guarantees to the proposed frameworks, as well as providing novel forecasting guarantees. Whilst we use forward factorisations of the time series when implementing \algo, the reconstruction guarantee is given in a backward factorisation, as it builds on existing works. Furthermore, some of the assumptions made to achieve the results can be too restrictive for a real-life dataset. Our experiments use TabICLv2 as an example of a TSFM to create experts; however, it would be possible to implement the same experiments with any existing out-of-the-box TSFM. 

%% file: sections/app_technical.tex
\subsection{Derivation of the Evidence Lower Bound}
\label{subapp:elbo}
Recall that the target data is \(\mathbf{y}_{1:T} \in \R^{T \times m}\), with covariate data \(\mathbf{x}_{1:T} \in \R^{T \times d}\) and the \(N\) expert forecasts for \(\mathbf{y}_{1:T}\) are \(\mathbf{F}_{1:T}^i \in\R^{N \times T \times m}\), where \(1\leq i \leq N\). 
For ease of notation, denote \(\mathbf{x}_{1:T} = (\mathbf{x}_{1:T}, \mathbf{F}_{1:T}^{1:N})\) for the rest of this derivation. 
Consider the joint distribution \(p_\theta(\mathbf{z}_{1:T},\mathbf{x}_{1:T},\mathbf{y}_{1:T})\) whose parameters $\theta\in\Theta$ we would like to learn. Since the posterior \(p_{\theta}(\mathbf{z}_{1:T}\mid\mathbf{x}_{1:T}, \mathbf{y}_{1:T})\) is intractable, we use variational inference to approximate it, and maximise the ELBO on parameters \((\theta,\varphi)\), which is given by
\[
\mcl(\mathbf{y}_{1:T},\mathbf{x}_{1:T}; \theta, \varphi) = \E_{q_\varphi(\cdot\mid\mathbf{y}_{1:T},\mathbf{x}_{1:T})} \left[\log \frac{p_\theta(\cdot, \mathbf{y}_{1:T}\mid\mathbf{x}_{1:T})}{q_\varphi(\cdot\mid \mathbf{y}_{1:T},\mathbf{x}_{1:T})}\right] \eqsp.
\]
With the definition of the joint distribution and the properties of the logarithm, this can be written as:
\begin{multline}
\label{eq:additive_elbo}
    \mcl(\mathbf{y}_{1:T},\mathbf{x}_{1:T}; \theta, \varphi) \\= \E_{q_\varphi(\cdot\mid\mathbf{y}_{1:T},\mathbf{x}_{1:T})}\left[\log p_{\theta}(\cdot) +\log p_\theta(\mathbf{y}_{1:T}\mid \mathbf{x}_{1:T}, \cdot) - \log q_\varphi(\cdot\mid\mathbf{y}_{1:T}, \mathbf{x}_{1:T})\right] \eqsp.
\end{multline}

Within the expectation, each term can be derived in closed form. The \emph{prior} is factorised as \(p_\theta(\mathbf{z}_{1:T}) = p_{\theta}(z_1)\prod_{t=2}^Tp_\theta(z_{t}\mid{z}_{t-1})\), where we parametrise \(p_\theta(z_1)\) with \(\mcn(z_1; \mu_1, \rho_1)\) and \(p_\theta(z_{t}\mid{z_{t-1}})\) is parametrised by \(\mcn(z_{t+1};f_\theta(x_{t+1}, z_t), \rho^2)\). The logarithm of the prior distribution is therefore given by: 
\begin{align*}
    \log p_\theta(\mathbf{z}_{1:T}) & = \log \left( p_\theta({z}_{1}) \prod_{t=2}^T p_\theta({z}_{t}\mid{z}_{t-1}) \right) \\
    & =\log p_\theta({z}_{1}) + \sum_{t=2}^T \log p_\theta({z}_{t}\mid{z}_{t-1}) \\
    &= -\frac{1}{2} \log(2\pi\rho_1^2) - \frac{(z_1-\mu_1)^2}{2\rho_1^2} + \sum_{t=2}^T \left(-\frac{1}{2}\log(2\pi\rho^2) - \frac{(z_t - f_\theta(x_t, z_{t-1}))^2}{2\rho^2}\right) \\
    &= -\frac{1}{2} \log(2\pi\rho_1^2) - \frac{(z_1 - \mu_1)^2}{2\rho_1^2} - \frac{T}{2}\log(2\pi\rho^2) - \sum_{t=2}^T \frac{(z_t - f_\theta(x_t, z_{t-1}))^2}{2\rho^2} \\
    &= -\frac{1}{2} \left(\log(2\pi\rho_1^2) + T\log(2\pi\rho^2) - \frac{(z_1 -\mu_1)^2}{2\rho_1^2} - \sum_{t=2}^T\frac{(z_t-f_\theta(x_t, z_{t-1}))^2}{2\rho^2}\right) \eqsp.
\end{align*}

The Gaussian \emph{decoder} \(p_\theta(\mathbf{y}_{1:T}\mid \mathbf{z}_{1:T}, \mathbf{x}_{1:T})\) factorises into \(p_\theta(\mathbf{y}_{1:T}\mid \mathbf{z}_{1:T}) = \prod_{t=1}^T p_\theta(y_t\mid z_t)\), where the conditioning on \(x_t\) is implicit from the dependency on \(z_t\), and where \(p_\theta(y_t\mid z_t)\) is parametrised by \(\mcn(y_t;h_\theta(F_t,z_t), \Sigma)\). The logarithm of this distribution is given by:
\begin{align*}
    \log p_\theta (\mathbf{y}_{1:T} \mid \mathbf{z}_{1:T}) &= \log \prod_{t=1}^T p_\theta(y_t\mid z_t) \\
    &= \sum_{t=1}^T \log p_\theta(y_t\mid z_t) \\
    &= -\frac{T}{2}\log\left(\mathrm{det}(2\pi \Sigma)\right) - \frac{1}{2}\sum_{t=1}^T\left(y_t - h_\theta(z_t,F_t)\right)^\top \Sigma^{-1}\left(y_t - h_\theta(z_t,F_t)\right)\eqsp.
\end{align*}

The posterior \(p_\theta(\mathbf{z}_{1:T}\mid\mathbf{x}_{1:T}, \mathbf{y}_{1:T})\) is intractable. We estimate it with the variational family \(q_\varphi(\mathbf{z}_{1:T}\mid\mathbf{x}_{1:T}, \mathbf{y}_{1:T}) = q_\varphi(z_1\mid x_1,y_1)\prod_{t=2}^T q_\varphi(z_t\mid\mathbf{x}_{1:t-1}, \mathbf{y}_{1:t-1}, z_{t-1})\), where as per \eqref{eq:enc}, \( q_\varphi(z_1\mid x_1,y_1)\) and \(q_\varphi(z_t\mid\mathbf{x}_{1:t-1}, \mathbf{y}_{1:t-1}, z_{t-1})\) are Gaussians parametrised by \(\mcn(z_1;\mu_\varphi(y_1, x_1);\Sigma_\varphi(y_1, x_1))\) and \(\mcn(z_t;\mu_{\varphi, t}(z_{t-1}, \mathbf{x}_{1:t-1}, \mathbf{y}_{1:t-1}), \Sigma_{\varphi,t}(z_{t-1}, \mathbf{x}_{1:t-1}, \mathbf{y}_{1:t-1}))\) respectively. The logarithm of the variational family is therefore:
\begin{align*}
    \log &q_\varphi(\mathbf{z}_{1:T}\mid \mathbf{x}_{1:T}, \mathbf{y}_{1:T}) = \log \left( q_\varphi(z_1 \mid x_1, y_1)\prod_{t=2}^T q_\varphi(z_t\mid\mathbf{x}_{1:t-1}, \mathbf{y}_{1:t-1}, z_{t-1}) \right) \\
    &= \log q_\varphi(z_1 \mid x_1, y_1)+\sum_{t=2}^T \log q_\varphi(z_t\mid\mathbf{x}_{1:t-1}, \mathbf{y}_{1:t-1}, z_{t-1}) \\
    &= \log\left[\frac{1}{\sqrt{2\pi\sigma_\varphi^2(x_1)}}\exp\left(-\frac{(z_1 - \mu_\varphi(y_1))^2}{2\sigma^2_\varphi(x_1)}\right)\right] + \sum_{t=2}^T \log\left[\frac{1}{\sqrt{2\pi\sigma^2_{\varphi,t}}} \exp\left(-\frac{(z_t - \mu_{\varphi,t}(\mathbf{y}_{1:t}))^2}{2\sigma^2_{\varphi,t}}\right)\right] \\
    & =-\frac{1}{2}\left(\log(2\pi\sigma^2_\varphi(x_1)) - \frac{(z_1-\mu_\varphi(y_1))^2}{2\sigma^2_{\varphi,t}(x_1)}\right) + \sum_{t=2}^T\left[\log(2\pi\sigma^2_{\varphi,t}) - \frac{(z_t - \mu_{\varphi,t}(\mathbf{y}_{1:t}))^2}{2\sigma^2_{\varphi, t}}\right] \eqsp.
\end{align*}
Plugging these values into \eqref{eq:additive_elbo} gives the following ELBO:
\begin{align*}
    \mathcal{L}(\theta,\phi; \mathbf{y}_{1:T}, \mathbf{x}_{1:T}) &= \E_{q_\varphi(\cdot\mid\mathbf{y}_{1:T}, \mathbf{x}_{1:T})} \biggl[-\frac{1}{2}\log(2\pi\rho_1^2) + T \log(2\pi\rho^2) - \frac{(z_1-\mu)^2}{2\rho_1^2} \\
    &- \sum_{t=2}^T\frac{(z_t-f_\theta(x_t, z_{t-1}))^2}{2\rho^2} + \frac{1}{2}\biggl(-T\log(2\pi\sigma^2)+\sum_{t=1}^T\left(-\frac{(y_t - h_\theta(z_t,F_t))^2}{2\sigma^2}\right)\\
    &+\frac{1}{2}\log(2\pi\rho_1^2)-T\log(2\pi\rho^2) + \frac{(z_1 -\mu)^2}{2\rho_1^2} + \sum_{t=1}^T\frac{(z_t - f_\theta(x_t, z_{t-1}))^2}{2\rho^2}\biggr)\biggr] \eqsp.
\end{align*}

\subsection{Derivation of the forecasting distribution}
\label{subapp:forecasting}
The conditional expectation of a new observation given past observations and covariates can be written as:
\begin{align*}
    \E_{p_\theta(\cdot\mid \mathbf{y}_{1:t},\mathbf{x}_{1:t+1})}\left[y_{t+1}\right] &= \int 
    p_\theta(y_{t+1}\mid \mathbf{y}_{1:t},\mathbf{x}_{1:t+1})\rmd y _{t+1} \\
    &= \int
    p_\theta(y_{t+1}\mid z_{t+1}) p_\theta(z_{t+1}\mid z_{t},x_{t+1}) p_\theta(z_t\mid\mathbf{y}_{1:t},\mathbf{x}_{1:t}) \rmd y_{t+1} \rmd z_{t+1} \rmd z_t\eqsp.
\end{align*}
This predictive expectation can be approximated using the trained variational family:
\[
q_\varphi(\mathbf{z}_{1:T} \mid \mathbf{y}_{1:T}, \mathbf{x}_{1:T})
= q_{\varphi,T}(z_T \mid \mathbf{y}_{1:T}, \mathbf{x}_{1:T})\prod_{t=1}^{T-1} q_{\varphi,t|t+1}(z_{t} \mid z_{t+1}, \mathbf{y}_{1:t}, \mathbf{x}_{1:t})\eqsp.
\]
In this setting, the expectation under $p_\theta(z_t\mid\mathbf{y}_{1:t},\mathbf{x}_{1:t})$ can be approximated either by an expectation under $q_\varphi({z}_{t} \mid \mathbf{y}_{1:t}, \mathbf{x}_{1:t})$ or using a VAMP-like sampler \citep{tomczak_vae_2018}:
\[
\mu_{\mathrm{vamp}}(z_t) = \frac{1}{n}\sum_{i=1}^n q_\varphi({z}_{t}\mid\mathbf{y}_{1:t}^i,\mathbf{x}_{1:t})\eqsp.
\]

\subsection{Proof of Proposition~\ref{prop:reconstruction}}
\label{subapp:pf_reconstruction}
\begin{proof}
Conditionally on $(\mathbf{x}_{1:T},\mathbf{F}_{1:T})$ the proposed nonlinear ICA model is a general state space model. Under the strong mixing assumption {\bf B2}, we can apply Proposition~1 of \cite{JMLR:v25:22-1392} using the additive state functional
$$
h_{1:T}(\mathbf{z}_{1:T}) = \sum_{t=1}^T\ell_t(z_t) = \sum_{t=1}^T\|y_t-h_\theta(\mathbf{F}_t,z_t)\|^2\eqsp.
$$
Therefore, for all $1\leq k \leq T-1$, and all probability densities $\tilde q_k$,
\begin{multline*}
    \left|
        \sum_{t=1}^T
        \mathbb E_{q_{\varphi}(\cdot | \mathbf{y}_{1:T}, \mathbf{x}_{1:T})}[\ell_t(z_t)]
        -
        \sum_{t=1}^T
        \mathbb E_{\phi^\theta_{1:T}(\cdot | \mathbf{y}_{1:T}, \mathbf{x}_{1:T})}[\ell_t(z_t)]
    \right| \leq 2\frac{\sigma_+}{\sigma_-}\sum_{t=1}^{T-1}\|\ell_t\|_{\infty}\\
    \times\left(c_1(\theta) + \sum_{m=2}^k\rho^{k-m+1}c_m(\theta,\varphi) + c_{k+1}(\theta,\varphi)+  \sum_{m=k+2}^{T}\rho^{m-k-1}c_m(\theta,\varphi)\right)\eqsp,
\end{multline*}
where $\rho = 1 - \sigma_-/\sigma_+$ and $c_1(\theta) = \|\tilde q_1 - \phi^\theta_{1}\|_{\mathrm{tv}}$ and for $2\leq k \leq T$,
$$
c_k(\theta,\varphi) = \left \|\tilde \phi^\theta_{k-1|k} - \tilde \nu^\varphi_{k-1|k} \right\|_{\mathrm{tv}}\eqsp,
$$
where $\tilde \phi^\theta_{k-1|k}(z_{k-1},z_k) \propto \tilde q_{k-1}(z_{k-1})p_\theta(z_{k}|z_{k-1},x_{k})p_\theta(y_{k}|z_{k})$ and $\tilde \nu^\varphi_{k-1|k}(z_{k-1},z_k) = \tilde q_k(z_k)q_{\varphi,k-1|k}(z_{k-1}\mid z_{k}, \mathbf{x}_{1:k-1}, \mathbf{y}_{1:k-1})$.
By choosing for all $1\leq k \leq T$, $\tilde q_k = \phi^\theta_{k}$, yields $c_1(\theta) = 0$ and, dropping the dependency on the covariates $\mathbf{x}_{1:T}$ for better clarity,
\begin{align*}
    c_k(\theta,\varphi) & = \left \|\frac{\phi^\theta_{k-1}(z_{k-1})p_\theta(z_{k}|z_{k-1})p_\theta(y_{k}|z_{k})}{\int \phi^\theta_{k-1}(z_{k-1})p_\theta(z_{k}|z_{k-1})p_\theta(y_{k}|z_{k}) \rmd z_{k-1}\rmd z_k} -\phi^\theta_{k}(z_k)q_{\varphi,k-1|k}(z_{k-1}\mid z_{k}, \mathbf{y}_{1:k-1}) \right\|_{\mathrm{tv}}\\
    &\leq \left \|\frac{\phi^\theta_{k-1}(z_{k-1})p_\theta(z_{k}|z_{k-1})p_\theta(y_{k}|z_{k})}{\int \phi^\theta_{k-1}(z_{k-1})p_\theta(z_{k}|z_{k-1})p_\theta(y_{k}|z_{k}) \rmd z_{k-1}\rmd z_k} -\phi^\theta_{k}(z_k)b^\theta_{k-1|k}(z_{k-1}\mid z_{k},  \mathbf{y}_{1:k}) \right\|_{\mathrm{tv}} \\
    &\hspace{2cm}+\left \|\phi^\theta_{k}(z_k)b^\theta_{k-1|k}(z_{k-1}\mid z_{k},  \mathbf{y}_{1:k}) -\phi^\theta_{k}(z_k)q_{\varphi,k-1|k}(z_{k-1}\mid z_{k}, \mathbf{y}_{1:k-1}) \right\|_{\mathrm{tv}}\eqsp.
\end{align*}
By definition of the backward kernel, the first term in the last inequality is 0 and therefore, by assumption {\bf B3},
$$
c_k(\theta,\varphi) \leq \varepsilon\eqsp.
$$
Then, by assumption {\bf B1} which gives \(\|y_t - h_\theta(\mathbf{F}_t, z_t)\|^2 \leq \|y_t\|+\|h_\theta(\mathbf{F}_t,z_t)\|\leq c_2 + c_1\), we can bound the square norm:
$$
\|y_t-h_\theta(\mathbf{F}_t,z_t)\|^2\leq 2c_2^2 + 2c_1^2\eqsp,
$$
so that $\|\ell_t\|_{\infty}\leq 2c_1^2+2c_2^2$, which gives \(C \coloneqq 2\frac{\sigma_-}{\sigma_+}(2c_2^2 + 2c_1^2)\). This concludes the proof.
\end{proof}

%% file: sections/app_experiments.tex
\paragraph{Metrics.} For a time series $\mathbf{{y}}_{1:T}$ and a sequence of predictions $\mathbf{\hat{y}}_{1:T}$, the \emph{Root Mean Square Error (RMSE)} and \emph{Mean Absolute Error (MAE)} are  given by 
\begin{equation*}
    \textrm{RMSE}(\mathbf{{y}}_{1:T}, \mathbf{\hat{y}}_{1:T}) = \left(\frac{1}{T}\sum_{t=1}^T (\hat{y}_t - y_t)^2\right)^{1/2}\eqsp,\quad
    \textrm{MAE}(\mathbf{{y}}_{1:T},\mathbf{\hat{y}}_{1:T}) = \frac{1}{T}\sum_{t=1}^T \lvert\hat{y}_t - y_t\rvert\eqsp.
\end{equation*}

\paragraph{TSFM prompts for expert creation.} We use TabICLv2\footnote{\url{https://tabicl.readthedocs.io/en/latest/}} \citep{qu_tabiclv2_2026} in regression mode to predict the next time step. For this appendix, we call the \emph{training fold} of the dataset the initial part of the time series which is given as context to the TSFM. The TSFM then makes single-step forecasts using that original context, and uses those forecasts for the following steps. At a chosen frequency, the ground truth for the forecast made replaces the forecast in the context for the following time steps. So for example, for a daily dataset with monthly prompting, the TSFM will use the original training fold, then make forecasts on a daily basis for a month; at the end of that month, the true value of the time series will be revealed and added to the context, replacing the last month's forecasts. For the NN5, Mauna Loa and Washington Bicycle Share datasets, we use a high-frequency prompt and a lower-frequency prompt. For the London Smart Meter dataset, we only use low-frequency prompts. For each frequency used, we use both the mean and the raw quantile outputs from TabICLv2 as our experts.

\paragraph{Datasets.}
\emph{Individual consumption aggregated by socio-economic profile. } The London Smart Meter dataset \citep{SmartMeter_dataset} records the electricity consumption of households with a smart meter in London between 2012 and 2014. The covariates include Acorn segments, which provide a socio-demographic segmentation\footnote{\url{https://acorn.caci.co.uk/how-acorn-works/}} for each household. The dataset is aggregated by Acorn segments: experiments are run on a group of these segment (segments K, L, M, N and O, which are part of the `Steadfast Communities' group), to see how \algo handles energy data in a noisy setting. It also evaluates how the model performs when learning to forecast for multiple categories at once. Electricity consumption is the target variable; more information on the available covariates and how they are preprocessed is given in \Cref{tab:london_smart_meter}.
\begin{table}[ht]
\centering
\begin{tabular}{c|c}
\textbf{Feature}     & \textbf{Preprocessing}     \\ \hline \hline
Acorn segment        & One-hot encoded            \\ \hline
Number of clients    & Normalised between 0 and 1 \\ \hline
Date                 & Year cyclically encoded    \\ \hline
Weekday              & One-hot encoded            \\ \hline
Temperature          & Normalised between 0 and 1 \\ \hline
Consumption (target) & Normalised between 0 and 1
\end{tabular}
\caption{Preprocessing for variables in the London Smart meter dataset.}
\label{tab:london_smart_meter}
\end{table}

\emph{UK ATM demand. } The NN5 dataset \citep{godahewa2020nn5} provides weekly demand for 111 ATMs in the UK. Besides the demand (which is the target variable), only the date is known -- as noted in \Cref{tab:nn5_dataset} -- the month and week of the year are cyclically encoded. This tests \algo in a setting with multiple series learned at once, despite having very little covariate information and a large amount of noise.
\begin{table}[ht]
\centering
\begin{tabular}{c|c}
\textbf{Feature} & \textbf{Preprocessing}     \\ \hline\hline
Demand (target)  & Normalised between 0 and 1 \\ \hline
Month of year    & Cyclically encoded         \\ \hline
Week of year     & Cyclically encoded        
\end{tabular}
\caption{Preprocessing for variables in the NN5 weekly dataset.}
\label{tab:nn5_dataset}
\end{table}

\emph{Atmospheric CO2 levels. } \citet{maunaloa_dataset} provide monthly mean measurements of atmospheric CO$_{2}$ at Mauna Loa. It has strong seasonal and trend components. The target data is detrended by differencing (each \(y_t\) is replaced by \(y_t^{\textrm{detrended}} = y_t -  y_{t-1}\)), to avoid testing on out-of-domain data. Preprocessing information for available covariates is given in \Cref{tab:mauna_loa_dataset}. This datasets tests \algo in a setting where the distribution of the test fold is different from that of the training fold. 
\begin{table}[ht]
\centering
\begin{tabular}{c|c}
\textbf{Feature}                             & \textbf{Preprocessing}             \\ \hline\hline
Positional encoding                          & Encoded between 0 and 1            \\ \hline
Year                                         & Normalised between 0 and 1         \\ \hline
Month                                        & Cyclically encoded                 \\ \hline
Mean monthly concentration (target)       & Detrended, normalised between 0 and 1; lagged \\ \hline
Deseasonalised mean monthly concentration & Detrended, normalised between 0 and 1; lagged \\ \hline
Number of days with measurements             & Normalised between 0 and 1         \\ \hline
Standard deviation                           & Normalised between 0 and 1         \\ \hline
Uncertainty                                  & None                              
\end{tabular}
\caption{Preprocessing for variables in the Mauna Loa monthly dataset.}
\label{tab:mauna_loa_dataset}
\end{table}

\emph{Bicycle share dataset. } The Washington Bicycle Share dataset \citep{Bike_dataset} provides the number of rented bicycles on an hourly time step over 2011 -- 2012, alongside weather and calendar covariates. Two categories of users exist: subscribed users and casual users. The total number of users is given as a target variable. The lags of subscribed users and casual users are given as covariates, as well as the lag for the total number of users. This dataset tests \algo on a noisy, high-frequence dataset. Further information on how these variables were preprocessed is given in \Cref{tab:bike_share_dataset}.
\begin{table}[ht]
\centering
\begin{tabular}{c|c}
\textbf{Feature}                 & \textbf{Preprocessing}                \\ \hline\hline
Season                           & One-hot encoded                       \\ \hline
Bank holiday                     & One-hot encoded                       \\ \hline
Day of week                      & One-hot encoded                       \\ \hline
Hour                             & Cyclically encoded                    \\ \hline
Month                            & Cyclically encoded                    \\ \hline
Working day                      & One-hot encoded                       \\ \hline
Temperature                      & Normalised between 0 and 1            \\ \hline
Average temperature              & Normalised between 0 and 1            \\ \hline
Humidity                         & Normalised between 0 and 1            \\ \hline
Windspeed                        & Normalised between 0 and 1            \\ \hline
Weather conditions               & One-hot encoded                       \\ \hline
Number of non-subscription users & Normalised between 0 and 1, lag taken \\ \hline
Number of subscribed users       & Normalised between 0 and 1, lag taken \\ \hline
Total users (target)             & Normalised between 0 and 1, lag added
\end{tabular}
\caption{Preprocessing for variables in the Washington Bicycle Share dataset.}
\label{tab:bike_share_dataset}
\end{table}

\paragraph{Model architecture. } \algo is implemented in PyTorch. Training is conducted with the AdamW optimiser \citep{loshchilov2018decoupled} that uses a cyclic learning rate scheduler \citep{smith2017cyclical}. The components of the \algo model are implemented as follows: 
\begin{itemize}
    \item \emph{Encoder.} An attention layer \citep{vaswani2017attention} is preceded and followed by a normalisation layer \citep{ba2016layernormalization}. This is followed by two linear layers, separated by a GELU activation function. The output mean has a sigmoid activation function; the output variance is clamped between -6 and 0.
    \item \emph{Prior.} The prior is made up of a linear input layer, a GELU activation function and an output layer which feeds through a sigmoid activation function.
    \item \emph{Decoder.} The decoder is made up of a linear input layer, a GELU activation function, a linear output layer with a sigmoid activation function.
\end{itemize}

\paragraph{Hyperparameters. }
Each dataset requires domain- and frequency-aware hyperparameter selection. These modelling choices enable strong forecasting performance. \Cref{tab:hyperparameters} gives the main hyperparameters which required per-dataset tuning.
\begin{table}[ht]
\centering
\begin{tabular}{c|c|c|c|c}
\textbf{Hyperparameter}   & \textbf{Smart Meter} & \textbf{Bike Share} & \textbf{NN5 Weekly} & \textbf{Mauna Loa} \\ \hline \hline
Sequence length           & 64                   & 96                  & 16                  & 196                 \\ \hline
Batch size               & 128                  & 256                 & 64                  & 128                 \\ \hline
Learning rate             & $5\times10^{-3}$     & $1\times10^{-3}$    & $1\times10^{-3}$    & $5\times10^{-3}$    \\ \hline
Latent dimension          & 64                   & 64                  & 32                  & 64                  \\ \hline
Epochs                    & 200                  & 20                  & 60                  & 500                 \\ \hline
Number heads in encoder   & 4                    & 4                   & 4                   & 4                   \\ \hline
Encoder hidden layer      & 128                  & 256, 128            & 128                 & 128                 \\ \hline
Prior hidden layer        & 64                   & 64                  & 64                  & 64                 \\ \hline
Decoder hidden layer      & 128                  & 128                 & 128                 & 64                       
\end{tabular}
\caption{Hyperparameters chosen for each dataset.}
\label{tab:hyperparameters}
\end{table}

\paragraph{Inference-time compute.} In our setting, implementing the ML-Poly mixture of experts method requires almost no additional computational overhead. Calculating the oracle forecast and the best expert likewise does not require additional time. \Cref{tab:inference_time} provides compute times for the TSFM inference (on higher and lower frequency of prompts, for mean and quantile outputs respectively), the expectation maximisation for the Kalman Filter and training and inference time for the \algo algorithm. The additional compute overhead needed to run expectation maximisation on the Kalman Filter further justifies using a non-linear encoder for \algo: as stated in \Cref{sec:theory}, the linear latent state cannot be disentangled. This is especially true for the London Smart Meter and NN5 datasets, where the Kalman Filter needs to be refit for each time series in the dataset, whereas \algo only requires to be trained once and can then run inference for every series in the dataset.
\begin{table}[ht]
\centering
\begin{tabular}{c|c|c|c|c}
\textbf{Dataset} & \textbf{TSFM low freq.} & \textbf{TSFM high freq.} & \textbf{Kalman} & \textbf{\algo train + infer} \\ \hline\hline
Smart Meter*     & 1'44, 1'27              & Not used                 & 35”             & 3’04” + 1”                   \\ \hline
Bike Share       & 39', 10'                & 60', 10'                 & 13'02"          & 9'48" + 33"                  \\ \hline
NN5*             & 10", 9"                 & 40", 59"                 & 5"              & 37" + 1"                     \\ \hline
Mauna Loa        & 1’, 1’                  & 14’, 16’                 & 1’              & 4’ + 2”                          
\end{tabular}
\caption{Compute cost for \algo compared to the baseline models. Datasets with a * show cost for fitting the Kalman Filter and running \algo inference a single series. TSFM prompting times show the time needed to make a mean forecast and a raw quantiles forecast.}
\label{tab:inference_time}
\end{table}

\paragraph{Qualitative samples. }
We provide sample time series and their forecasts for the éCO$_2$mix, London Smart Meter (Acorn segment K) and Washington Bicycle share datasets. This provides an additional qualitative perspective on where a mixture of experts method is a valuable addition to TSFM forecasts. 

\Cref{fig:london_sample} shows a sample from the London Smart Meter dataset, which provides a challenging case to the experts, where they consistent predict smaller quantities than the actual outcome. \algo is able to correct for this, and has more occurrences of clearly identifying the extreme values of peaks and valleys than the Kalman Filter. It also it has the advantage of only needing to be trained once for all the segments in the time series; whereas the Kalman Filter would require refitting for each time series in the dataset.
\begin{figure}[ht]
    \centering
    \includegraphics[width=.75\linewidth]{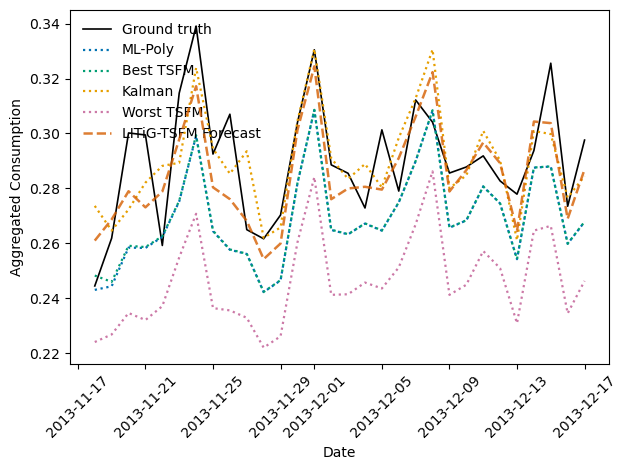}
    \caption{London Smart Meter, Acorn Segment K forecast sample of \algo and its baselines.}
    \label{fig:london_sample}
\end{figure}

The sample from the Washington Bicycle Share dataset in \Cref{fig:bike_share_sample} is similar to the London Smart Meter segment shown, where there is a large gap between the worst performing expert and target value. \algo is again capable of fitting the peaks and valleys, and it is faster at training and inference combined, compared to the Kalman Filter. This is due to the length of the time series to forecast, which is at much higher frequency than the London Smart Meter sample provided above.
\begin{figure}[ht]
    \centering
    \includegraphics[width=.75\linewidth]{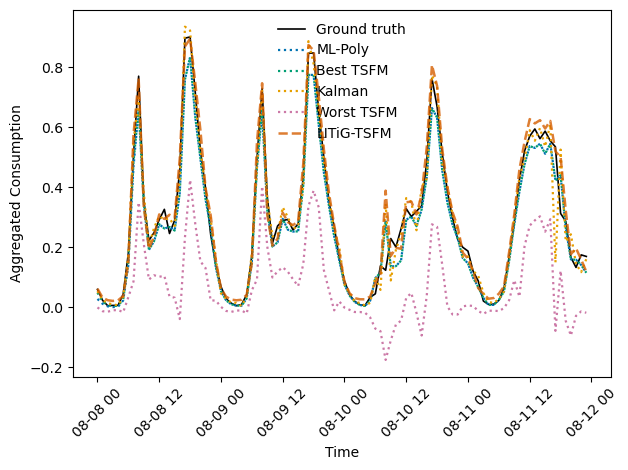}
    \caption{Washington Bike Share forecast sample of \algo and its baselines.}
    \label{fig:bike_share_sample}
\end{figure}

\Cref{fig:nn5_sample} provides a sample for a sample from the NN5 dataset, alongsides the forecasts from the baseline approaches. This shows that ML-Poly and Kalman either other-smooth the mixture of predictions, or are not able to learn the pattern consistently. \algo bridges these two approaches.
\begin{figure}[ht]
    \centering
    \includegraphics[width=.75\linewidth]{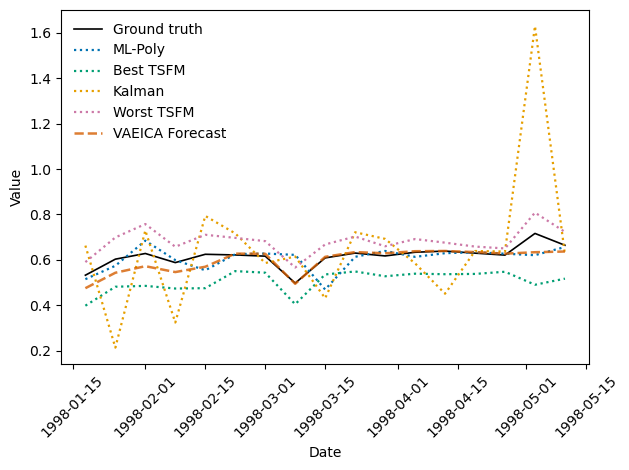}
    \caption{NN5 forecast sample of \algo and its baselines.}
    \label{fig:nn5_sample}
\end{figure}

\Cref{fig:mauna_loa_sample} provides a forecast from the detrended Mauna Loa time series.
\begin{figure}[ht]
    \centering
    \includegraphics[width=.75\linewidth]{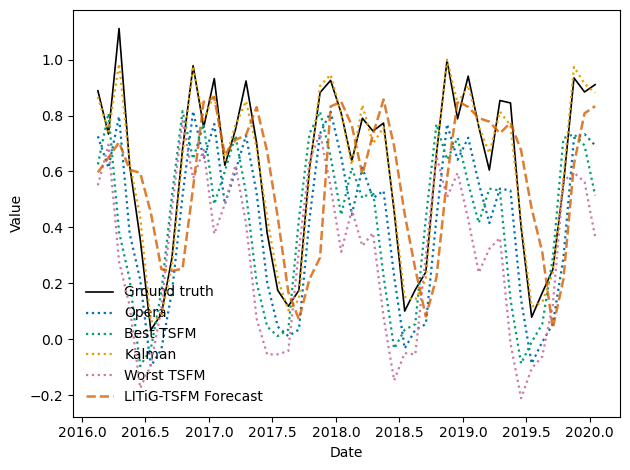}
    \caption{Detrended Mauna Loa forecast sample of \algo and its baselines.}
    \label{fig:mauna_loa_sample}
\end{figure}

\paragraph{Architecture ablation. } We conduct an ablation study on \algo's reconstruction performance, by working with different encoder and decoder architectures for the NN5 dataset. The trained models are evaluated on their reconstruction performance on the training data. The performance for each of these models is given in \Cref{fig:ablation:architecture_london}, where the MLP Decoder and Attention Encoder are as described in \Cref{app:experiments}. The \emph{MLP Encoder} is made up of a linear layer of size 32 and positional encoding of the position in the sequence, a LeakyReLU activation with slope 1.15, a linear layer of size 16 and a linear output for mean and covariance, where the mean has a Sigmoid activation layer and the variance is clamped between -6 and 2. The \emph{dot product Decoder} constrains the latent space to be of the same size as the number of experts used, as it simply takes the dot product of the latent state \(z_{t}\) by the experts \(\mathbf{F}_t\), with Gaussian noise \(\varepsilon\sim\mcn(0,1)\): \(\mathbf{y_{t}} = \mathbf{F}_t\cdot z_t + \varepsilon\). These experiments use a latent space of dimension 100, which corresponds to the number of experts: this is a constraint when using the dot product Decoder. The attention Encoder and MLP Decoder use a latent space of dimension 64.
\begin{figure}[ht]
    \centering
    \includegraphics[width=.75\linewidth]{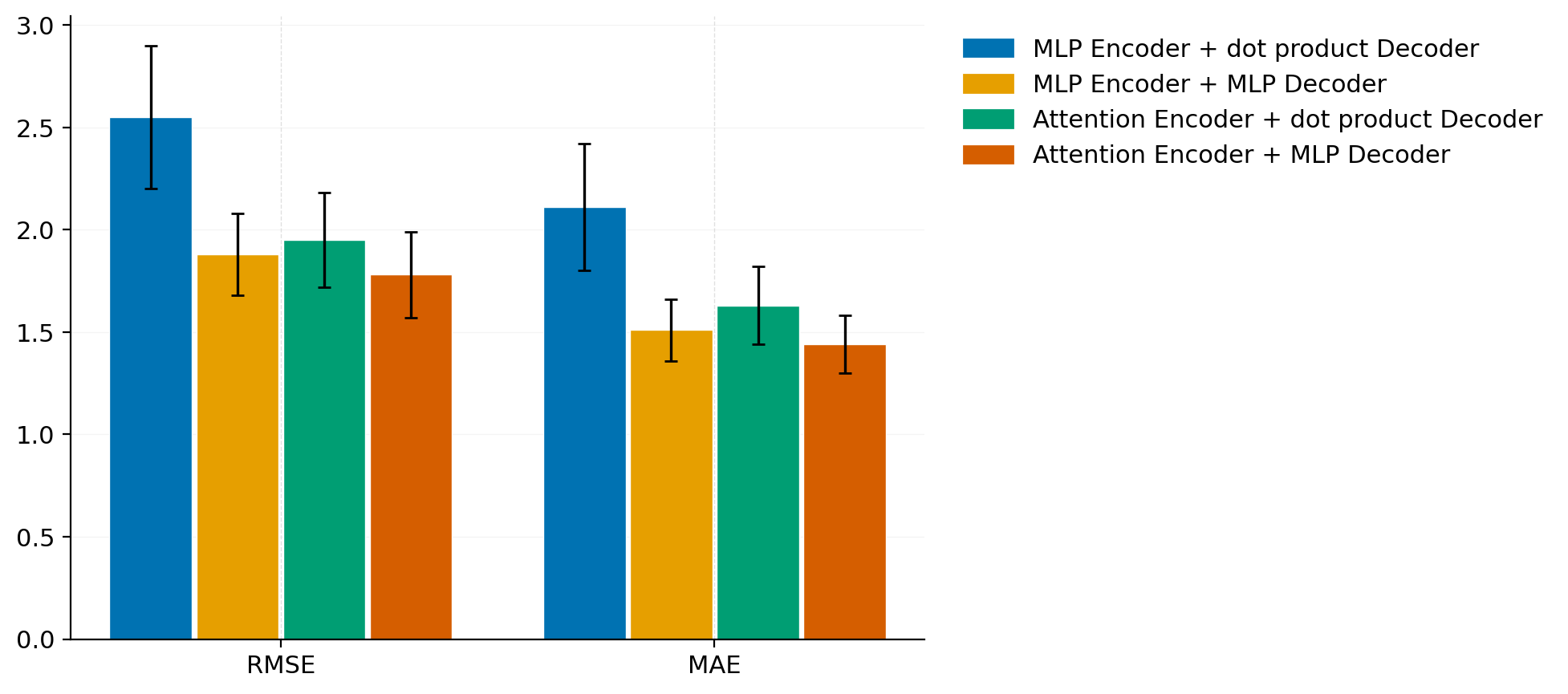}
    \caption{Architecture ablation: \algo performance on the London Smart Meter dataset with different encoder and decoders. Metrics given $\times10^{-2}$, lower is better.}
    \label{fig:ablation:architecture_london}
\end{figure}

Using a Multi-Layered Perceptron (MLP) in the decoder enables a stronger performance. Similarly, an attention-based encoder allows to better embed the sequential dependencies in the dataset. This is due to two factors: first, the most sophisticated decoder architecture means that the decoding can be learned more efficiently. Second, it means that the dimension of the latent space no longer depends on the number of experts, which gives choice for the dimension of the latent space. This gives more possibilities for model tuning, thus making it stronger.